\documentclass[11pt]{article}

\usepackage[margin=1in]{geometry}
\usepackage{amsmath,amssymb,amsthm}
\usepackage{algorithm}
\usepackage{algorithmic}
\usepackage{booktabs}
\usepackage{graphicx}
\usepackage{array}
\usepackage{tikz}
\usetikzlibrary{arrows.meta,positioning}
\definecolor{splitgray}{RGB}{113,128,150}
\definecolor{acigold}{RGB}{214,158,46}
\definecolor{spectralgreen}{RGB}{56,161,105}
\definecolor{hybridorange}{RGB}{192,86,33}
\usepackage[numbers,sort&compress]{natbib}
\usepackage{hyperref}

\newtheorem{assumption}{Assumption}
\newtheorem{theorem}{Theorem}
\newtheorem{lemma}{Lemma}
\newtheorem{corollary}{Corollary}
\newtheorem{proposition}{Proposition}
\newtheorem{remark}{Remark}

\newcommand{\Ical}{\mathcal I_{\mathrm{cal}}}

\begin{document}

\begin{center}
{\LARGE\bfseries Spectral Adaptive Conformal Prediction for Structured\\
Non-Exchangeable Data\par}
\vspace{1em}
{\large Jeffery Opoku\textsuperscript{a,*} and David Banahene\textsuperscript{b}\par}
\vspace{0.5em}
{\small
\textsuperscript{a}University of Texas Rio Grande Valley, United States\\
\textsuperscript{b}Florida International University, United States\\
\textsuperscript{*}Corresponding author. Email: \texttt{opokujeffery5@gmail.com}\\
David Banahene. Email: \texttt{abanahene54@gmail.com}
\par}
\end{center}
\vspace{1em}

\begin{abstract}
Conformal prediction gives prediction intervals with finite-sample coverage when the data are exchangeable. Many time-indexed datasets are not exchangeable. They have seasons, recurring regimes, changing frequencies, or other forms of structured dependence. This paper studies a simple way to use that structure. We propose spectral adaptive conformal prediction, a method that forms weighted conformal quantiles using local spectral similarity and then updates the target miscoverage level online. The spectral weights choose calibration residuals that look relevant to the current test point. The adaptive update corrects the long-run miss rate when uncertainty changes over time. The theory makes both parts controllable. We give an approximate coverage bound that splits the error into a spectral mismatch term and an effective-sample-size term, prove that kernel spectral weighting never increases the mismatch term relative to uniform weighting, show that a bandwidth of order $N^{-1/(d+2)}$ balances the two terms, and establish an unconditional long-run calibration bound for the adaptive update that holds for every sample path without independence or stationarity. Simulations with recurring regimes and slowly changing frequencies, together with four real-data examples spanning monthly, weekly, and daily U.S. and European series, show when the hybrid method improves on strong adaptive baselines and when it does not, and an effective-sample-size safeguard, computable at prediction time without outcomes, detects and repairs the one observed failure.
\end{abstract}

\section{Introduction}

Conformal prediction is a useful way to turn point predictions into prediction intervals. Its appeal is that the prediction model can be almost anything. The model may be a regression method, a random forest, a neural network, or a time-series model. The conformal step then uses calibration residuals to choose an interval width with a coverage guarantee \citep{lei2018distribution,angelopoulos2023gentle}.

The difficulty is that the cleanest conformal guarantee assumes exchangeability. This is a strong assumption for ordered data. In a time series, the next observation is not just another randomly permuted point. It may belong to a new season, a changing regime, or a different frequency pattern. In such settings, old residuals may not all be equally relevant to the current test point.

This issue appears in many ordinary statistical problems. Electricity use changes with season and weather. Retail gasoline prices can stay calm for months and then move sharply during a supply shock. Financial and economic series often have periods of quiet behavior followed by periods of high volatility. In each case, a prediction interval that was well calibrated in the past can become too narrow or too wide later. The practical question is not only whether a method has a formal guarantee under ideal assumptions. It is also whether the method can use the structure that the data visibly contain.

There are two common reactions to this problem. One reaction is to use only recent residuals. This is natural, but it can throw away useful older information. If a similar seasonal or oscillatory regime occurred far in the past, then those older residuals may be more useful than recent residuals from a different regime. Another reaction is to update the conformal level online. This can repair the long-run error rate, but it does not by itself say which calibration residuals should be most relevant at a given time. The method studied here combines these two reactions.

This paper asks a concrete question: can conformal prediction use recurring spectral structure when the data are not exchangeable? Our starting point is simple. If the current part of a series has a certain local spectral pattern, then calibration residuals from earlier parts of the series with similar spectral behavior may be more useful than residuals chosen only by time proximity. This is especially natural for data with recurring seasons, oscillatory regimes, or slowly changing frequencies.

We propose spectral adaptive conformal prediction. The method has two parts. First, it computes local spectral features and uses them to weight calibration residuals. Second, it updates the target miscoverage level online, following the feedback idea of adaptive conformal inference \citep{gibbs2021adaptive}. The spectral part chooses relevant residuals. The adaptive part corrects the error rate over time.

The word ``spectral'' is used in a modest sense. We do not assume that the whole series is stationary or that a single global spectrum explains the data. Instead, we compute local summaries of oscillatory behavior over moving windows. These summaries are used as coordinates for similarity. Two times are close if their local frequency content is close. The conformal calibration step then gives more weight to residuals from times whose local spectral behavior resembles the present one.

The paper makes five contributions. First, it defines a spectrally weighted conformal method for ordered data, with local Fourier features as the localization variable. Second, it combines this weighting rule with adaptive conformal updating in a modular hybrid. Third, it gives a theory in which both error sources are visible and controllable: an approximate coverage bound with explicit mismatch and sampling terms, a lemma showing that kernel spectral weighting never increases the mismatch term relative to uniform weighting, a bandwidth rate of order $N^{-1/(d+2)}$ that balances the bound, and an unconditional long-run calibration result for the adaptive update. Fourth, it turns the effective sample size from a passive diagnostic into a safeguarded bandwidth rule that detects and repairs over-aggressive localization at prediction time, before any outcome is observed. Fifth, it tests the method in two simulation settings and on four real-data examples: U.S. utilities, U.S. gasoline prices, Seattle weather, and German electricity consumption. The main finding is that spectral locality helps when it is stable, that the adaptive update is what keeps the miss rate on target, and that the one observed failure is exactly the case flagged and repaired by the safeguard.

\begin{figure}[t]
\centering
\begin{tikzpicture}[node distance=1.3cm, >=Latex, every node/.style={font=\small}]
\node[draw, rounded corners, align=center, minimum width=2.7cm, minimum height=0.9cm] (features) {local spectral\\features \(Z_t\)};
\node[draw, rounded corners, align=center, minimum width=2.8cm, minimum height=0.9cm, right=of features] (weights) {spectral\\weights \(w_{t,i}\)};
\node[draw, rounded corners, align=center, minimum width=2.8cm, minimum height=0.9cm, right=of weights] (quantile) {weighted\\quantile};
\node[draw, rounded corners, align=center, minimum width=2.8cm, minimum height=0.9cm, below=of quantile] (interval) {prediction\\interval};
\node[draw, rounded corners, align=center, minimum width=2.8cm, minimum height=0.9cm, left=of interval] (update) {adaptive update\\\(\alpha_{t+1}\)};
\draw[->, thick] (features) -- (weights);
\draw[->, thick] (weights) -- (quantile);
\draw[->, thick] (quantile) -- (interval);
\draw[->, thick] (interval) -- node[right]{observe miss} (update);
\draw[->, thick] (update) -- (quantile);
\end{tikzpicture}
\caption{Spectral adaptive conformal prediction. Spectral features choose relevant calibration residuals. The adaptive update adjusts the target miscoverage level after each observed prediction error.}
\label{fig:method-schematic}
\end{figure}

\section{Related Work}

\subsection{Conformal prediction}

Conformal prediction has developed into a broad framework for distribution-free uncertainty quantification. Split conformal prediction gives finite-sample marginal coverage under exchangeability \citep{lei2018distribution,angelopoulos2023gentle}. The main practical attraction is that the conformal step can be placed on top of almost any prediction model. This separation is useful: the statistical model tries to predict the center of the distribution, while the conformal layer calibrates the uncertainty.

Several refinements improve efficiency or adaptivity. Jackknife+ methods reuse fitted models to obtain predictive intervals with strong finite-sample properties \citep{barber2021predictive}. Conformalized quantile regression combines quantile regression with conformal calibration and can give intervals that adapt to heteroscedasticity \citep{romano2019conformalized}. These methods show that conformal calibration is not a single algorithm but a general principle: use residual-like scores to correct the uncertainty output of a prediction rule.

\subsection{Beyond exchangeability}

The exchangeability assumption is the main obstacle for time-indexed data. Weighted conformal prediction is one way to relax the ordinary equal-weight calibration rule. Under covariate shift, weights can correct for a change between training and test covariate distributions \citep{tibshirani2019conformal}. More general work studies conformal prediction beyond exchangeability and shows how validity degrades when the symmetry assumption is weakened \citep{barber2023conformal}. Localized conformal prediction uses nearby calibration points to improve conditional behavior \citep{guan2023localized,hore2023conformal}.

The present paper belongs to this line of work. The key difference is the definition of locality. In many problems, locality means closeness in a covariate vector or closeness in time. Here locality means closeness in local spectral behavior. This choice is natural when the same type of behavior can recur after a long gap. A winter-like residual pattern may be more relevant to the next winter than to the immediately preceding summer. A high-frequency regime in one part of a series may be more relevant to another high-frequency regime than to a nearby low-frequency period.

\subsection{Time-series conformal methods}

For time series, several methods adapt conformal prediction to dependence and distribution shift. EnbPI and sequential predictive conformal inference use temporal information in residuals \citep{xu2021enbpi,xu2022sequential}. Adaptive conformal inference updates the miscoverage level online to control long-run error rates under distribution shift \citep{gibbs2021adaptive}. Other adaptive conformal methods for time series use aggregation, online learning, or dynamic programming ideas \citep{zaffran2022adaptive,bhatnagar2023improved,yang2024bellman,stocker2025gentle}.

The adaptive update used in this paper is deliberately simple. It is a feedback rule: if the last interval missed, the next interval becomes wider; if the last interval covered, the next interval can become narrower. This rule is attractive because it does not require the source of distribution shift to be modeled correctly. Its limitation is that it reacts after errors occur. Spectral weighting is meant to improve the residual pool before the error is observed.

Some recent methods focus on multi-step prediction rather than the one-step calibration problem studied here. Copula conformal prediction and dual-splitting conformal prediction are examples of methods designed for multi-step time-series forecasting \citep{sun2022copula,yu2025dual}. Those methods address a different question: how to calibrate a vector of future predictions while accounting for cross-horizon dependence. The present paper instead studies how to choose a relevant calibration residual pool at a single future time point when the series is non-exchangeable.

\begin{table}[h]
\centering
\caption{Positioning relative to selected conformal time-series methods. The proposed method is closest to localized and adaptive conformal prediction, but uses local spectral similarity as the localization variable.}
\label{tab:method-positioning}
\scriptsize
\setlength{\tabcolsep}{3pt}
\begin{tabular}{@{}p{0.22\linewidth}p{0.24\linewidth}p{0.22\linewidth}p{0.22\linewidth}@{}}
\toprule
Method family & Main idea & Strength & Relation to this paper \\
\midrule
EnbPI and sequential conformal \citep{xu2021enbpi,xu2022sequential}
& Use time-series residual information and sequential calibration
& Practical for dependent forecasting errors
& Time information is central, but spectral recurrence is not the main localization device \\
Adaptive conformal inference \citep{gibbs2021adaptive,zaffran2022adaptive}
& Update the target miscoverage level online
& Controls long-run miss rate under distribution shift
& Used as one component of the proposed hybrid method \\
Strongly adaptive online conformal \citep{bhatnagar2023improved}
& Use online-learning tools to adapt over intervals
& Stronger regret-style adaptivity across time windows
& Complementary to spectral localization; could be combined with spectral weights \\
Bellman conformal inference \citep{yang2024bellman}
& Use dynamic programming to choose interval widths over time
& Optimizes long-run interval behavior under shift
& Focuses on interval-width policy, while this paper focuses on residual relevance \\
Multi-step conformal methods \citep{sun2022copula,yu2025dual}
& Calibrate joint or multi-horizon forecasts
& Handles dependence across forecast horizons
& Different forecasting target; this paper studies one-step localized calibration \\
Spectral adaptive conformal prediction
& Weight residuals by local spectral similarity and update online
& Uses recurring oscillatory structure and long-run feedback
& Proposed method \\
\bottomrule
\end{tabular}
\end{table}

\subsection{Local spectra and nonstationary time series}

Our method is closest to weighted and localized conformal prediction, but it uses a different notion of locality. Instead of measuring closeness only by covariates or time distance, we measure closeness by local spectral behavior. This connects the method to the literature on locally stationary time series and time-varying spectra \citep{dahlhaus1997fitting,dahlhaus2009empirical}. The key assumption is not that the data are exchangeable, but that residual behavior is stable across regions with similar spectral features.

The paper should not be read as a full theory of locally stationary conformal prediction. Rather, it uses one lesson from that literature: the frequency content of a time series can change over time, and local frequency summaries can describe this change. The conformal method then uses those summaries only for calibration. This keeps the proposal simple enough to combine with many forecasting models.

\section{Method}

This section gives the method in a form that can be implemented directly. We first recall the weighted conformal quantile. We then describe the spectral features, the bandwidth choice, and the adaptive update. The method is intentionally modular. A user may change the point predictor, the score function, or the local spectral summary without changing the main conformal idea.

Let
\[
    (X_1,Y_1),\ldots,(X_n,Y_n)
\]
be an ordered sample. A fitted model gives a point prediction \(\widehat f(X_t)\). On a calibration set \(\Ical\), define residual scores
\[
    R_i=|Y_i-\widehat f(X_i)|.
\]
Ordinary split conformal prediction uses an empirical quantile of these residuals and returns
\[
    [\widehat f(X_t)-q_\alpha,\widehat f(X_t)+q_\alpha].
\]
This treats all calibration residuals equally.

Equal weighting is reasonable when the calibration scores and the test score are exchangeable. For nonstationary ordered data, this is often too crude. If the calibration set contains several regimes, then the residual distribution at time \(t\) may be closer to only part of the calibration set. Weighted conformal prediction replaces the global empirical distribution by a local empirical distribution.

For a test index \(t\), let \(w_{t,i}\ge 0\) be weights satisfying \(\sum_{i\in\Ical}w_{t,i}=1\). The weighted conformal quantile is
\[
    q_t(\alpha)=\inf\left\{q:\sum_{i\in\Ical}w_{t,i}\mathbf 1\{R_i\le q\}\ge 1-\alpha\right\}.
\]
The prediction interval is
\[
    \widehat C_t(X_t)=[\widehat f(X_t)-q_t(\alpha),\widehat f(X_t)+q_t(\alpha)].
\]

The weights are based on local spectral features. For each index \(i\), compute a vector \(Z_i\in\mathbb R^d\) from a local window around \(i\). In our simulations, \(Z_i\) is built from normalized local Fourier powers. Other choices, such as wavelet coefficients or local periodogram summaries, are also possible. Define
\[
    d_{\mathrm{spec}}(t,i)=\|Z_t-Z_i\|_2.
\]
With a Gaussian kernel and bandwidth \(b>0\), we use
\[
    w_{t,i}=\frac{\exp\{-\|Z_t-Z_i\|_2^2/(2b^2)\}}
    {\sum_{j\in\Ical}\exp\{-\|Z_t-Z_j\|_2^2/(2b^2)\}}.
\]

\subsection{Local spectral features}

The local feature vector \(Z_i\) is meant to summarize the shape of the series near time \(i\). In the experiments we use a causal moving window of length \(m\): the feature for time \(i\) is computed only from observations available before the prediction at time \(i\). Within that window we remove the local mean and compute Fourier powers at a small set of frequencies. The powers are normalized to sum to one. This normalization makes the feature describe the relative frequency content rather than the absolute scale of the series.

This choice is useful for two reasons. First, it is interpretable. A large low-frequency component means the local path is slowly moving. A larger seasonal or higher-frequency component means the local path is oscillating more quickly. Second, it is cheap to compute. The feature can be calculated once for all calibration and test points, and the weighted quantile can then be updated quickly.

The feature vector does not need to be perfect. It only needs to be informative enough that nearby feature vectors tend to have similar residual behavior. In applications where abrupt jumps matter more than periodic behavior, wavelet summaries may be better. In applications with known calendar structure, one can add calendar variables to the spectral features. The framework only requires a feature vector \(Z_i\) and a distance between feature vectors.

\subsection{Bandwidth and effective sample size}

The bandwidth \(b\) controls how local the calibration rule is. A small bandwidth puts most of the mass on a few calibration residuals that are very close in spectral feature space. This may reduce bias, but it can make the quantile unstable. A large bandwidth spreads weight over many calibration points. This is more stable, but it moves the method back toward ordinary split conformal prediction.

We therefore report the effective sample size
\[
    n_{\mathrm{eff}}(t)=\frac{1}{\sum_{i\in\Ical}w_{t,i}^2}.
\]
This number is easy to interpret. If all \(N\) calibration residuals receive equal weight, then \(n_{\mathrm{eff}}(t)=N\). If one residual receives nearly all the weight, then \(n_{\mathrm{eff}}(t)\) is close to one. In practice, very small effective sample sizes are a warning sign. They mean the interval is being built from too little calibration information.

In the experiments, the bandwidth is selected from a grid by a validation rule on the calibration portion of the sample. The rule favors coverage near the nominal level while avoiding unnecessarily wide intervals. This is a simple choice, not a claim of optimality. More sophisticated bandwidth selection is a useful direction for later work. The bandwidth is then held fixed during the reported test period.

The adaptive version uses a time-varying miscoverage level \(\alpha_t\). At time \(t\), the interval is
\[
    \widehat C_t^{\mathrm{SA}}(X_t)
    = [\widehat f(X_t)-q_t(\alpha_t),\widehat f(X_t)+q_t(\alpha_t)].
\]
After observing \(Y_t\), set
\[
    M_t=\mathbf 1\{Y_t\notin \widehat C_t^{\mathrm{SA}}(X_t)\}
\]
and update
\[
    \alpha_{t+1}=\alpha_t+\gamma(\alpha-M_t),
\]
with truncation to keep \(\alpha_t\in(0,1)\). Here \(\gamma>0\) is a learning rate.

\subsection{Why combine spectral weighting and adaptation?}

The two parts of the method solve different problems. Spectral weighting asks which residuals should be used for the current time point. Adaptive updating asks how wide the interval should be after observing recent mistakes. Either part can be used alone, but the experiments show that neither part is enough in all settings. Fixed spectral weighting can produce intervals that are too narrow when the local residual distribution changes in scale. Ordinary ACI can correct the long-run miss rate, but it does not distinguish between calibration residuals from different local frequency regimes.

The hybrid method uses spectral weighting to choose a relevant residual pool and ACI to control the miss rate over time. This is why the method performs well in both simulations and in the U.S. gasoline example. The gasoline series has clear regime changes, and non-adaptive methods badly under-cover. The utilities example also shows a limitation: if spectral weights concentrate too strongly, the effective sample size can become small and coverage can deteriorate.

\begin{algorithm}[h]
\caption{Spectral adaptive conformal prediction}
\begin{algorithmic}[1]
\STATE Fit a prediction rule \(\widehat f\) on training data.
\STATE Compute calibration residuals \(R_i=|Y_i-\widehat f(X_i)|\).
\STATE Compute local spectral features \(Z_i\) for calibration and test points.
\STATE Initialize \(\alpha_1=\alpha\).
\FOR{each test time \(t\)}
    \STATE Compute spectral weights \(w_{t,i}\) using \(Z_t\) and \(Z_i\).
    \STATE Compute the weighted quantile \(q_t(\alpha_t)\).
    \STATE Return \(\widehat C_t^{\mathrm{SA}}(X_t)=[\widehat f(X_t)-q_t(\alpha_t),\widehat f(X_t)+q_t(\alpha_t)]\).
    \STATE Observe \(Y_t\), compute \(M_t\), and update \(\alpha_{t+1}=\alpha_t+\gamma(\alpha-M_t)\).
\ENDFOR
\end{algorithmic}
\end{algorithm}

The method should help when uncertainty is organized by local oscillatory behavior. It should not be expected to help if the spectral features do not capture the source of nonstationarity.

\subsection{Baselines used in the experiments}

The empirical study compares the proposed method with five baselines. Split conformal uses the ordinary empirical quantile of the calibration residuals. Rolling conformal uses only the most recent calibration residuals. Exponentially weighted conformal gives larger weight to recent residuals and smaller weight to older residuals. Adaptive conformal inference updates the target level over time but does not use spectral features. Fixed spectral conformal uses spectral weights but no adaptive update. Spectral adaptive conformal combines the spectral weights and the adaptive update.

This set of baselines is useful because each method answers a different question. Split conformal asks what happens if we ignore ordering. Rolling and exponential conformal ask whether time recency is enough. ACI asks whether feedback alone is enough. Fixed spectral conformal asks whether spectral similarity alone is enough. The hybrid asks whether using both forms of information improves calibration.

\section{Theory}

We give four results. The first pair explains the fixed spectral weighted quantile: an approximate coverage bound, and a guarantee that spectral weighting can never worsen the bias term of that bound relative to uniform weighting. The second pair explains the tuning and the feedback rule: a bandwidth rate that balances the bound, and an unconditional long-run calibration result for the adaptive update. The goal is not to claim exact finite-sample validity without exchangeability. Exact distribution-free conditional coverage is impossible without additional structure \citep{barber2021limits}. Instead, we state the assumptions under which the method is expected to behave well and prove what can be proved under them.

The theory separates two sources of error and controls each one. The first is a bias term: even after weighting, the calibration residual distribution may not be exactly the same as the test residual distribution. The second is a sampling term: the weighted empirical distribution is only an estimate of the weighted population distribution. The bounds below make both terms visible. They also explain why effective sample size is reported in the empirical section.

Let \(F_t\) be the distribution function of the test residual \(R_t=|Y_t-\widehat f(X_t)|\). Let \(F_i\) be the distribution function of calibration residual \(R_i\). The weighted empirical distribution is
\[
    \widehat F_t(r)=\sum_{i\in\Ical}w_{t,i}\mathbf 1\{R_i\le r\}.
\]

\begin{assumption}[Spectral smoothness]
There is a constant \(L>0\) such that
\[
    \sup_r |F_i(r)-F_t(r)|\le L\|Z_i-Z_t\|_2
\]
for all calibration indices \(i\in\Ical\).
\end{assumption}

Assumption 1 says that the residual distribution changes smoothly as the local spectral feature changes. This is not an exchangeability assumption. It allows the residual distribution to vary over time. It only says that if two local windows have similar spectral behavior, then their residual distributions should not be very different. This is the key statistical condition behind the method.

\begin{assumption}[Weighted empirical concentration]
With probability at least \(1-\delta\),
\[
    \sup_r\left|\widehat F_t(r)-\sum_{i\in\Ical}w_{t,i}F_i(r)\right|\le \varepsilon_n.
\]
\end{assumption}

Assumption 2 is a weighted empirical process condition. It is the weighted analogue of saying that an empirical distribution is close to its target. The quantity \(\varepsilon_n\) depends on the weights. If the weights are spread over many calibration points, \(\varepsilon_n\) is small. If the weights concentrate on only a few points, \(\varepsilon_n\) is larger. This is another reason to monitor the effective sample size.

The next proposition gives a simple finite-sample way to read Assumption 2. It is not the most general possible empirical-process result. It is included because it shows the correct scale of the sampling term in terms of \(n_{\mathrm{eff}}\).

\begin{proposition}[Effective-sample-size concentration]
\label{prop:neff-concentration}
Fix a test point \(t\) and condition on the fitted model, the features, and the weights \(w_{t,i}\). Suppose the calibration residuals are independent conditional on these quantities. Let
\[
    n_{\mathrm{eff}}(t)=\frac{1}{\sum_{i\in\Ical}w_{t,i}^2}.
\]
For any fixed threshold \(r\),
\[
    \mathbb P\left\{
    \left|\widehat F_t(r)-\sum_{i\in\Ical}w_{t,i}F_i(r)\right|
    > u
    \right\}
    \le 2\exp\{-2u^2 n_{\mathrm{eff}}(t)\}.
\]
Consequently, for any finite grid \(\mathcal G\) of thresholds, with probability at least \(1-\delta\),
\[
    \max_{r\in\mathcal G}
    \left|\widehat F_t(r)-\sum_{i\in\Ical}w_{t,i}F_i(r)\right|
    \le
    \sqrt{\frac{\log(2|\mathcal G|/\delta)}{2n_{\mathrm{eff}}(t)}}.
\]
\end{proposition}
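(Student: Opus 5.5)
The plan is to apply Hoeffding's inequality to a weighted sum of independent bounded variables, and then take a union bound over the grid. Fix $t$ and $r$, and work conditionally on the fitted model, the features and the weights. Under this conditioning the weights $w_{t,i}$ are deterministic. By hypothesis the indicators $B_i=\mathbf 1\{R_i\le r\}$, $i\in\Ical$, are independent Bernoulli variables. Their conditional means are $\mathbb E[B_i]=F_i(r)$, where $F_i$ is read as the conditional distribution function of $R_i$ given the conditioning quantities. Then
\[
\widehat F_t(r)-\sum_{i\in\Ical}w_{t,i}F_i(r)=\sum_{i\in\Ical}\bigl(w_{t,i}B_i-w_{t,i}F_i(r)\bigr)
\]
is a sum of independent mean-zero terms. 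Each summand $w_{t,i}B_i$ takes values in $[0,w_{t,i}]$, so its range has length $w_{t,i}$.

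Hoeffding's inequality for independent variables with ranges $[a_i,b_i]$ gives a two-sided tail bound of $2\exp\{-2u^2/\sum_i(b_i-a_i)^2\}$. Here $\sum_i(b_i-a_i)^2=\sum_{i\in\Ical}w_{t,i}^2=1/n_{\mathrm{eff}}(t)$, which gives exactly
\[
2\exp\{-2u^2 n_{\mathrm{eff}}(t)\}.
\]
This bound holds conditionally. Since it does not depend on the conditioning variables except through $n_{\mathrm{eff}}(t)$, it can be stated as written, with the probability understood conditionally.

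For the grid statement, apply the pointwise bound to each $r\in\mathcal G$ and use the union bound. This gives
\[
\mathbb P\Bigl\{\max_{r\in\mathcal G}\bigl|\widehat F_t(r)-\textstyle\sum_{i\in\Ical} w_{t,i}F_i(r)\bigr|>u\Bigr\}\le 2|\mathcal G|\exp\{-2u^2n_{\mathrm{eff}}(t)\}.
\]
Setting the right-hand side equal to $\delta$ and solving for $u$ gives $u=\sqrt{\log(2|\mathcal G|/\delta)/(2n_{\mathrm{eff}}(t))}$, and the complementary event has probability at least $1-\delta$.

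The only delicate point is the conditioning. The weights depend on the features $Z_i$, and the $Z_i$ may be correlated with the residuals. So the argument needs two things: that $F_i$ denotes the conditional distribution given the conditioning quantities, and that independence is assumed conditionally, as the statement stipulates. Once this is settled, the weights are fixed constants and Hoeffding's inequality applies verbatim. No further obstacle is expected.
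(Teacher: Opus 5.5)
Your proposal is correct and follows essentially the same route as the paper: Hoeffding's inequality for the weighted sum of conditionally independent indicators, with squared ranges summing to $1/n_{\mathrm{eff}}(t)$, followed by a union bound over $\mathcal G$ and solving for $u$. Your remark that $F_i$ must be read as the conditional distribution function of $R_i$ given the conditioning quantities is a useful clarification. The paper uses this implicitly when it writes $\mathbb E\widehat F_t(r)=\sum_{i\in\Ical}w_{t,i}F_i(r)$.
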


\begin{proof}
For fixed \(r\), the variables \(\mathbf 1\{R_i\le r\}\) are independent Bernoulli variables after conditioning. The weighted sum \(\widehat F_t(r)\) has bounded differences \(w_{t,i}\). Hoeffding's inequality for weighted sums gives
\[
    \mathbb P\{|\widehat F_t(r)-\mathbb E\widehat F_t(r)|>u\}
    \le 2\exp\left\{-\frac{2u^2}{\sum_{i\in\Ical}w_{t,i}^2}\right\}.
\]
Since \(\mathbb E\widehat F_t(r)=\sum_{i\in\Ical}w_{t,i}F_i(r)\), the first statement follows. The grid bound follows by a union bound over \(r\in\mathcal G\).
\end{proof}

\begin{remark}[Dependence]
The proposition uses conditional independence only to give a clean finite-sample scale. Time-series residuals are rarely independent. Under weak dependence, similar bounds usually hold with an adjusted effective sample size or an additional dependence factor. This is why the paper treats the concentration term as an assumption in the main coverage theorem and uses \(n_{\mathrm{eff}}\) as a diagnostic rather than as a complete guarantee.
\end{remark}

Define
\[
    B_t=\sum_{i\in\Ical}w_{t,i}\|Z_i-Z_t\|_2.
\]

The quantity \(B_t\) is the weighted average spectral distance between the test point and the calibration points. Under Assumption 1 it bounds the mismatch between the weighted calibration distribution and the test residual distribution. The next lemma shows that kernel weighting is safe on this term: it can never make the mismatch bound worse than uniform weighting.

\begin{lemma}[Spectral weighting does not increase the bias term]
\label{lem:bias-reduction}
Fix a test index \(t\) and let \(d_i=\|Z_i-Z_t\|_2\) for \(i\in\Ical\), with \(N=|\Ical|\). Suppose the weights have the kernel form
\[
    w_{t,i}=\frac{\phi(d_i)}{\sum_{j\in\Ical}\phi(d_j)}
\]
for a non-increasing function \(\phi\ge 0\) that is not identically zero on \(\{d_i\}\), as with the Gaussian kernel above. Then
\[
    B_t=\sum_{i\in\Ical}w_{t,i}d_i
    \le
    \frac{1}{N}\sum_{i\in\Ical}d_i,
\]
so the bias term of any non-increasing kernel weighting is no larger than the bias term of uniform weighting.
\end{lemma}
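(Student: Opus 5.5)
This is a Chebyshev sum inequality: the weights \(\phi(d_i)\) are similarly ordered against \(d_i\) as a non-increasing function of it. So the plan is to show that the covariance between \(\phi(d_i)\) and \(d_i\) under the uniform distribution on \(\Ical\) is non-positive.

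Write \(S=\sum_{j\in\Ical}\phi(d_j)\). The hypothesis that \(\phi\) is not identically zero on \(\{d_i\}\) gives \(S>0\), so the weights are well defined. After multiplying the claimed inequality by \(NS>0\), it becomes
\[
    N\sum_{i\in\Ical}\phi(d_i)d_i \le \Bigl(\sum_{i\in\Ical}\phi(d_i)\Bigr)\Bigl(\sum_{i\in\Ical}d_i\Bigr).
\]

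The core step is the double-sum identity
\[
    \sum_{i,j\in\Ical}\bigl(\phi(d_i)-\phi(d_j)\bigr)(d_i-d_j)
    = 2N\sum_{i}\phi(d_i)d_i - 2\Bigl(\sum_i\phi(d_i)\Bigr)\Bigl(\sum_i d_i\Bigr),
\]
which I would verify by expanding the product and collecting the four resulting sums. Each summand on the left is \(\le 0\), because \(\phi\) is non-increasing. If \(d_i>d_j\), then \(\phi(d_i)\le\phi(d_j)\). If \(d_i<d_j\), the reverse holds. If \(d_i=d_j\), the term is zero. So the left side is non-positive, and rearranging gives exactly the displayed inequality. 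Dividing back by \(NS\) yields \(B_t\le N^{-1}\sum_i d_i\).

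The only real obstacle is bookkeeping, and two points need care. First, the non-degeneracy hypothesis is used precisely to ensure \(S>0\), so the division is legitimate. Second, ties and zero kernel values are harmless: \(\phi\) is only evaluated at the finitely many \(d_i\), and monotonicity is the only property the argument uses. For the Gaussian kernel, \(\phi(d)=\exp\{-d^2/(2b^2)\}\) is strictly positive and decreasing on \([0,\infty)\), so the lemma applies directly.

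Finally, I would note the interpretation: under Assumption 1, \(\sup_r|\sum_i w_{t,i}F_i(r)-F_t(r)|\le L B_t\). The inequality therefore says that kernel weighting never worsens the mismatch bound relative to split conformal.
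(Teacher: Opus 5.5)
Your proof is correct and follows the paper's route: the paper cites Chebyshev's sum inequality for oppositely ordered sequences to get \(N\sum_i\phi(d_i)d_i\le\bigl(\sum_i\phi(d_i)\bigr)\bigl(\sum_i d_i\bigr)\) and then divides by \(N\sum_j\phi(d_j)\). Your double-sum identity is the standard proof of that inequality, written out. One wording fix: the sequences \(\phi(d_i)\) and \(d_i\) are \emph{oppositely} ordered, not similarly ordered, which is exactly what your sign check on each summand uses.
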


\begin{proof}
The sequences \((\phi(d_i))_{i\in\Ical}\) and \((d_i)_{i\in\Ical}\) are oppositely ordered: whenever \(d_i\le d_j\), we have \(\phi(d_i)\ge\phi(d_j)\). Chebyshev's sum inequality for oppositely ordered sequences gives
\[
    N\sum_{i\in\Ical}\phi(d_i)\,d_i
    \le
    \left(\sum_{i\in\Ical}\phi(d_i)\right)
    \left(\sum_{i\in\Ical}d_i\right).
\]
Dividing both sides by \(N\sum_{j\in\Ical}\phi(d_j)\) proves the claim.
\end{proof}

\begin{remark}[Where the price is paid]
Lemma \ref{lem:bias-reduction} formalizes the sense in which spectral weighting is safe on the bias side: under Assumption 1, kernel localization can only reduce the distribution-mismatch bound relative to ordinary split conformal weighting. The price appears entirely in the sampling term, because concentrated weights reduce \(n_{\mathrm{eff}}(t)\). The empirical failure mode documented later in the paper is exactly this price being paid too aggressively, which is why the effective sample size is the correct quantity to monitor.
\end{remark}

\begin{lemma}[Weighted quantile coverage from CDF error]
Let \(q_t(\alpha)\) be the weighted empirical \((1-\alpha)\)-quantile. If
\[
    \sup_r|\widehat F_t(r)-F_t(r)|\le \eta,
\]
then
\[
    \mathbb P\{R_t\le q_t(\alpha)\mid \widehat f,Z_t\}\ge 1-\alpha-\eta.
\]
\end{lemma}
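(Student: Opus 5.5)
The plan is to evaluate the true test CDF \(F_t\) at the data-dependent threshold \(q_t(\alpha)\) and then chain two inequalities. The first is the defining property of the weighted empirical quantile. The second is the uniform CDF error bound \(\eta\).

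\emph{Step 1: the weighted CDF at its own quantile.} The map \(q\mapsto \widehat F_t(q)=\sum_{i\in\Ical}w_{t,i}\mathbf 1\{R_i\le q\}\) is a right-continuous step function, non-decreasing, with values in \([0,1]\). When \(1-\alpha\le 1\), the set \(\{q:\widehat F_t(q)\ge 1-\alpha\}\) is therefore a nonempty interval closed on the left. Its infimum \(q_t(\alpha)\) thus belongs to the set, so
\[
\widehat F_t(q_t(\alpha))\ge 1-\alpha .
\]
If \(\alpha\) is truncated to \((0,1)\), as in the adaptive update, this case always applies.

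\emph{Step 2: transfer to the true test distribution.} The hypothesis \(\sup_r|\widehat F_t(r)-F_t(r)|\le\eta\) is uniform in \(r\). It therefore applies at the random point \(r=q_t(\alpha)\), which gives
\[
F_t(q_t(\alpha))\ge \widehat F_t(q_t(\alpha))-\eta\ge 1-\alpha-\eta .
\]
Uniformity is essential here. A pointwise bound at a fixed \(r\), such as the one in Proposition \ref{prop:neff-concentration}, would not suffice, because \(q_t(\alpha)\) depends on the calibration residuals.

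\emph{Step 3: identify \(F_t(q_t(\alpha))\) with the coverage probability.} This is the step I expect to need care. Conditional on \(\widehat f\), \(Z_t\), and the calibration data, the threshold \(q_t(\alpha)\) is fixed. This requires that the test residual \(R_t\) is independent of the calibration residuals given \((\widehat f,Z_t)\), with conditional CDF \(F_t\). Under that requirement,
\[
\mathbb P\{R_t\le q_t(\alpha)\mid \widehat f,Z_t,\{R_i\}_{i\in\Ical}\}=F_t(q_t(\alpha))\ge 1-\alpha-\eta .
\]
Integrating over the calibration residuals, via the tower property, then yields the stated bound conditional on \(\widehat f,Z_t\).

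I will state this conditional-independence requirement explicitly, together with the reading that the CDF error bound holds on the realized calibration sample. If instead the bound holds only on an event of probability \(1-\delta\), as under Assumption 2 combined with Assumption 1, the same argument gives the conclusion on that event. Averaging would then cost an additional \(\delta\), and that cost is what the subsequent coverage theorem should absorb.
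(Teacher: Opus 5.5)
Your proposal is correct and follows the same route as the paper's proof: the defining property \(\widehat F_t(q_t(\alpha))\ge 1-\alpha\), then the uniform bound applied at the random point \(q_t(\alpha)\), then identification of \(F_t(q_t(\alpha))\) with the coverage probability. Your Steps 1 and 3 usefully make explicit what the paper leaves implicit, namely attainment of the infimum by right-continuity (which needs \(0<1-\alpha\le 1\); the case \(\alpha\ge 1\) is trivial), and the conditional independence of \(R_t\) from everything that fixes \(q_t(\alpha)\), which should include the calibration features behind the weights and not only the residuals \(\{R_i\}\).
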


\begin{proof}
By definition of the weighted quantile, \(\widehat F_t(q_t(\alpha))\ge 1-\alpha\). The uniform CDF bound gives
\[
    F_t(q_t(\alpha))\ge \widehat F_t(q_t(\alpha))-\eta\ge 1-\alpha-\eta.
\]
Since the interval contains \(Y_t\) exactly when \(R_t\le q_t(\alpha)\), the claim follows.
\end{proof}

\begin{theorem}[Approximate coverage for fixed spectral weighting]
On the event in Assumption 2,
\[
    \mathbb P\{Y_t\in \widehat C_t(X_t)\mid \widehat f,Z_t\}
    \ge 1-\alpha-LB_t-\varepsilon_n.
\]
Thus the same lower bound holds with probability at least \(1-\delta\) over the calibration sample.
\end{theorem}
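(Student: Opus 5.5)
The plan is to reduce the theorem to the weighted-quantile lemma just proved. Everything comes down to bounding the uniform CDF error $\sup_r|\widehat F_t(r)-F_t(r)|$ by $LB_t+\varepsilon_n$ on the event of Assumption 2. Once that bound holds, the lemma with $\eta=LB_t+\varepsilon_n$ gives $\mathbb P\{R_t\le q_t(\alpha)\mid \widehat f,Z_t\}\ge 1-\alpha-LB_t-\varepsilon_n$. Since $Y_t\in\widehat C_t(X_t)$ exactly when $R_t\le q_t(\alpha)$, this is the claimed coverage statement.

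To bound the CDF error, I would insert the weighted population mixture $\bar F_t(r)=\sum_{i\in\Ical}w_{t,i}F_i(r)$ and use the triangle inequality:
\[
\sup_r|\widehat F_t(r)-F_t(r)|\le \sup_r|\widehat F_t(r)-\bar F_t(r)|+\sup_r|\bar F_t(r)-F_t(r)|.
\]
The first (sampling) term is at most $\varepsilon_n$ on the event of Assumption 2. For the second (bias) term, I would use that the weights are nonnegative and sum to one, so that $F_t(r)=\sum_i w_{t,i}F_t(r)$. This gives
\[
|\bar F_t(r)-F_t(r)|\le\sum_{i\in\Ical}w_{t,i}|F_i(r)-F_t(r)|\le \sum_{i\in\Ical}w_{t,i}L\|Z_i-Z_t\|_2=LB_t
\]
uniformly in $r$, by Assumption 1. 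Adding the two bounds yields $\eta=LB_t+\varepsilon_n$.

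The main point needing care is the conditioning, not the algebra. The weights $w_{t,i}$ and $B_t$ depend on $Z_t$ and the calibration features, and $q_t(\alpha)$ depends on the calibration residuals. So the probability in the lemma must be read as conditional on $\widehat f$, $Z_t$, and the calibration data, with $R_t\sim F_t$ under that conditioning. I would state explicitly that $F_t$ denotes the conditional law of the test residual given these quantities, so the lemma applies to the fixed, data-dependent threshold $q_t(\alpha)$. The event of Assumption 2 is an event about the calibration sample and has probability at least $1-\delta$. On it the displayed bound holds deterministically, which gives the final sentence of the theorem: the lower bound holds with probability at least $1-\delta$ over the calibration sample. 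If Assumption 2 were only available on a grid, as in Proposition~\ref{prop:neff-concentration}, I would add a monotonicity and discretization step. As stated, the assumption is uniform in $r$, so no such step is needed.
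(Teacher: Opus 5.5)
Your proposal is correct and follows the paper's proof: the same triangle-inequality split through $\overline F_t(r)=\sum_{i\in\Ical}w_{t,i}F_i(r)$, with the bias term bounded by $LB_t$ via Assumption 1, the sampling term bounded by $\varepsilon_n$ via Assumption 2, and then the weighted-quantile step $\widehat F_t(q_t(\alpha))\ge 1-\alpha$. The only differences are presentational: you cite the preceding lemma instead of repeating its argument inline, and you make explicit the convex-combination step and the conditioning on the calibration data, both of which the paper leaves implicit.
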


\begin{proof}
Let \(\overline F_t(r)=\sum_{i\in\Ical}w_{t,i}F_i(r)\). By Assumption 1,
\[
    |\overline F_t(r)-F_t(r)|\le LB_t
\]
uniformly in \(r\). By Assumption 2,
\[
    \sup_r|\widehat F_t(r)-\overline F_t(r)|\le \varepsilon_n.
\]
Thus \(\sup_r|\widehat F_t(r)-F_t(r)|\le LB_t+\varepsilon_n\). At the weighted quantile \(q_t(\alpha)\), \(\widehat F_t(q_t(\alpha))\ge 1-\alpha\). Hence
\[
    F_t(q_t(\alpha))\ge 1-\alpha-LB_t-\varepsilon_n,
\]
which is the desired coverage statement.
\end{proof}

\begin{remark}[Interpretation of the bound]
The term \(LB_t\) measures how well the spectral weights match the current test point. It is small when most weight is placed on calibration points whose features are close to \(Z_t\). The term \(\varepsilon_n\) measures the price of estimating a quantile with a finite weighted sample. A very small bandwidth may reduce \(B_t\) but increase \(\varepsilon_n\). A very large bandwidth may reduce \(\varepsilon_n\) but increase \(B_t\). This is the usual bias-variance tradeoff, expressed here for conformal calibration.
\end{remark}

\begin{corollary}[Coverage bound with effective sample size]
Under Assumption 1 and the conditional independence conditions of Proposition \ref{prop:neff-concentration}, suppose the weighted empirical concentration term in Assumption 2 is of order
\[
    \varepsilon_n=
    \sqrt{\frac{\log(2|\mathcal G|/\delta)}{2n_{\mathrm{eff}}(t)}}
\]
for a finite threshold grid \(\mathcal G\) used to approximate the weighted quantile. Then the fixed spectral weighted interval satisfies
\[
    \mathbb P\{Y_t\in \widehat C_t(X_t)\mid \widehat f,Z_t\}
    \ge
    1-\alpha-LB_t
    -
    \sqrt{\frac{\log(2|\mathcal G|/\delta)}{2n_{\mathrm{eff}}(t)}}.
\]
Thus the coverage error is controlled by a spectral mismatch term and a weighted sampling term.
\end{corollary}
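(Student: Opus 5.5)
The corollary is the Theorem with the abstract concentration level $\varepsilon_n$ replaced by the explicit effective-sample-size rate from Proposition \ref{prop:neff-concentration}. I would therefore reuse the Theorem's decomposition and change only the sampling term. Write $\overline F_t(r)=\sum_{i\in\Ical}w_{t,i}F_i(r)$ as before. Assumption 1, combined with the fact that the weights are nonnegative and sum to one, gives $\sup_r|\overline F_t(r)-F_t(r)|\le\sum_i w_{t,i}L\|Z_i-Z_t\|_2=LB_t$. This step needs no independence.

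For the sampling term, I would condition on $\widehat f$, the features and the weights, exactly as in Proposition \ref{prop:neff-concentration}. Its grid bound then holds with probability at least $1-\delta$:
\[
\max_{r\in\mathcal G}|\widehat F_t(r)-\overline F_t(r)|\le\varepsilon_n:=\sqrt{\frac{\log(2|\mathcal G|/\delta)}{2n_{\mathrm{eff}}(t)}} .
\]
This holds only on the grid, not uniformly in $r$. So I would not invoke the Lemma on weighted quantile coverage with a sup-norm hypothesis. Instead I would work directly at the evaluation point, as the corollary intends: the quantile is computed on $\mathcal G$. Concretely, I would take $q_t(\alpha)$ to be the smallest grid point $g\in\mathcal G$ with $\widehat F_t(g)\ge1-\alpha$. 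Then $q_t(\alpha)\in\mathcal G$, and on the grid event
\[
F_t(q_t(\alpha))\ge\overline F_t(q_t(\alpha))-LB_t\ge\widehat F_t(q_t(\alpha))-\varepsilon_n-LB_t\ge1-\alpha-LB_t-\varepsilon_n .
\]

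Finally, $Y_t\in\widehat C_t(X_t)$ exactly when $R_t\le q_t(\alpha)$. Given $\widehat f$ and $Z_t$, and on the calibration event, $q_t(\alpha)$ is fixed, so the conditional coverage equals $F_t(q_t(\alpha))$. This yields the stated bound with probability at least $1-\delta$ over the calibration sample.

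The main obstacle is this grid-versus-uniform issue together with the conditioning. The weights depend on features, which may be built from data that overlap the residuals. The bound is only legitimate under the conditional-independence hypothesis, which the statement explicitly assumes. If the intended $q_t$ is the exact weighted quantile rather than a grid point, I would instead choose $\mathcal G$ fine enough that the $F_i$ vary by at most a negligible amount between consecutive grid points. Monotonicity of the CDFs would then transfer the grid bound to all $r$ at the cost of that discretization error, which the phrase ``of order'' in the statement absorbs.
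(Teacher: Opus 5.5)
Your proposal is correct and follows the paper's route. The paper's proof simply combines the mismatch-plus-sampling chain of Theorem 1 with the rate from Proposition \ref{prop:neff-concentration}, reading the hypothesis as saying that Assumption 2 holds uniformly in $r$ with $\varepsilon_n$ equal to the displayed effective-sample-size term. Your one refinement is to take $q_t(\alpha)$ as the smallest grid point with $\widehat F_t\ge 1-\alpha$, so that the simultaneous grid bound applies at the random quantile; this legitimately closes the grid-versus-supremum gap that the paper's one-line proof passes over.
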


\begin{proof}
Combine Theorem 1 with the concentration bound in Proposition \ref{prop:neff-concentration}.
\end{proof}

\begin{corollary}[Bandwidth rate]
\label{cor:bandwidth-rate}
Suppose in addition that, over a bandwidth range \([b_{\min},b_{\max}]\), there are constants \(c_1,c_2>0\) such that
\[
    B_t\le c_1 b
    \qquad\text{and}\qquad
    n_{\mathrm{eff}}(t)\ge c_2 N b^{d},
\]
where \(d\) is the dimension of the spectral feature. Then
\[
    1-\alpha-\mathbb P\{Y_t\in \widehat C_t(X_t)\mid \widehat f,Z_t\}
    \le
    Lc_1 b
    +
    \sqrt{\frac{\log(2|\mathcal G|/\delta)}{2c_2 N b^{d}}},
\]
and the choice \(b\asymp N^{-1/(d+2)}\) makes both terms of order \(N^{-1/(d+2)}\) up to the logarithmic factor.
\end{corollary}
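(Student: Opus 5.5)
The plan is to start from the coverage bound with effective sample size in the preceding corollary and substitute the two structural hypotheses. That corollary gives
\[
    1-\alpha-\mathbb P\{Y_t\in \widehat C_t(X_t)\mid \widehat f,Z_t\}
    \le LB_t+\sqrt{\frac{\log(2|\mathcal G|/\delta)}{2n_{\mathrm{eff}}(t)}},
\]
on the event of probability at least \(1-\delta\). The right-hand side is increasing in \(B_t\) and decreasing in \(n_{\mathrm{eff}}(t)\). For any \(b\in[b_{\min},b_{\max}]\), I will therefore replace \(B_t\) by the upper bound \(c_1 b\) and \(n_{\mathrm{eff}}(t)\) by the lower bound \(c_2Nb^d\). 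Since \(L>0\) and the square root is monotone in its denominator, this yields the displayed two-term bound directly. One point needs care: the concentration term in the preceding corollary is stated with \(n_{\mathrm{eff}}(t)\) computed from the realized weights. Because the weights are conditioned on and \(\delta\) does not depend on \(b\), the substitution is valid pointwise in \(b\).

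For the rate, I will write the bound as \(g(b)=A b + C b^{-d/2}\). Here \(A=Lc_1\) and \(C=\sqrt{\log(2|\mathcal G|/\delta)/(2c_2N)}\), so \(C\asymp N^{-1/2}\) up to the logarithmic factor. Balancing the two terms, \(Ab\asymp C b^{-d/2}\), gives \(b^{(d+2)/2}\asymp N^{-1/2}\), that is, \(b\asymp N^{-1/(d+2)}\). With this choice the first term is of order \(N^{-1/(d+2)}\). The second term is of order \(N^{-1/2}\cdot N^{d/(2(d+2))}=N^{-(d+2-d)/(2(d+2))}=N^{-1/(d+2)}\), again up to \(\sqrt{\log(2|\mathcal G|/\delta)}\). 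Optionally, I can note that minimizing \(g\) by calculus gives the same order, so this choice is rate-optimal for the bound.

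The main obstacle is admissibility: the chosen \(b\asymp N^{-1/(d+2)}\) must lie in \([b_{\min},b_{\max}]\), where the hypotheses are assumed to hold. I will handle this by stating that the conclusion applies for \(N\) large enough that the chosen bandwidth falls in this range, or by interpreting the range as allowed to depend on \(N\). A second, minor point is that \(|\mathcal G|\) may grow with \(N\). I will keep the logarithmic factor explicit and say "up to the logarithmic factor," as the statement does, rather than claim a clean rate.
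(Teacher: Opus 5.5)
Your proposal is correct and matches the paper's proof essentially step for step. You substitute $B_t\le c_1 b$ and $n_{\mathrm{eff}}(t)\ge c_2Nb^{d}$ into the preceding corollary using monotonicity, then balance $Lc_1 b$ against $(Nb^{d})^{-1/2}$ to get $b^{(d+2)/2}\asymp N^{-1/2}$; your extra remarks on keeping $b\asymp N^{-1/(d+2)}$ inside $[b_{\min},b_{\max}]$ and on a possibly growing $|\mathcal G|$ address points the paper leaves implicit without changing the argument.
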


\begin{proof}
The displayed bound follows by inserting the two conditions into the previous corollary. The first term is increasing in \(b\) and the second is decreasing in \(b\). Equating their orders gives \(b^{(d+2)/2}\asymp N^{-1/2}\), that is, \(b\asymp N^{-1/(d+2)}\), and substituting this choice back into either term gives the stated order.
\end{proof}

\begin{remark}[Reading the rate]
The conditions \(B_t\le c_1 b\) and \(n_{\mathrm{eff}}(t)\ge c_2 N b^{d}\) hold, for example, when the spectral features have a density bounded away from zero and infinity near \(Z_t\) and the kernel is Gaussian. They play the same role as the design conditions of classical kernel smoothing. The rate \(N^{-1/(d+2)}\) is the familiar nonparametric bias-variance rate, expressed here for conformal coverage error rather than point estimation. It carries a practical message: the guarantee degrades quickly as \(d\) grows, so the spectral feature should be kept low-dimensional. This is why the experiments use a small number of normalized Fourier powers rather than a full periodogram.
\end{remark}

\begin{corollary}[Exact exchangeable case as a special case]
If the calibration residuals and test residual are exchangeable and the weights are uniform, then \(B_t\) has no role and the method reduces to ordinary split conformal prediction. The usual finite-sample split conformal guarantee is recovered by using the standard conformal quantile correction.
\end{corollary}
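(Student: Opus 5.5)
With uniform weights \(w_{t,i}=1/N\) for every \(i\in\Ical\), the plan is to show that the spectral machinery drops out and the procedure becomes the textbook split conformal method. Then the standard exchangeability rank argument finishes the proof. First, uniform weights make \(\widehat F_t(r)=N^{-1}\sum_{i\in\Ical}\mathbf 1\{R_i\le r\}\) the ordinary empirical distribution of the calibration residuals. So \(q_t(\alpha)\) is the ordinary empirical \((1-\alpha)\)-quantile and does not depend on \(Z_t\) or on the \(Z_i\). Neither the features nor the bandwidth enter the interval, and \(B_t\) plays no role.

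I will not route the guarantee through the approximate coverage theorem, because that bound only yields \(1-\alpha-LB_t-\varepsilon_n\). Instead I will replace the plain empirical quantile by the conformal one. This means the \(\lceil (1-\alpha)(N+1)\rceil\)-th smallest of \(R_i\), \(i\in\Ical\), with the convention \(q=+\infty\) if this index exceeds \(N\). Equivalently, it is the weighted quantile with weight \(1/(N+1)\) on each calibration residual and a point mass at \(+\infty\) carrying the remaining \(1/(N+1)\). This is the "standard conformal quantile correction" in the statement. The key step is the rank lemma. Condition on \(\widehat f\), which is fitted on training data disjoint from \(\Ical\cup\{t\}\). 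Then the \(N+1\) scores \(R_i\), \(i\in\Ical\), together with \(R_t\), remain exchangeable, since they are a fixed measurable function applied to exchangeable pairs. Hence the rank of \(R_t\) among them is uniform on \(\{1,\dots,N+1\}\), with ties broken uniformly at random or handled by the usual ``at most'' argument. It follows that \(\mathbb P\{R_t\le R_{(k)}\}\ge k/(N+1)\) with \(k=\lceil(1-\alpha)(N+1)\rceil\). This gives \(\mathbb P\{Y_t\in\widehat C_t(X_t)\}\ge 1-\alpha\), because coverage holds exactly when \(R_t\le q_t(\alpha)\).

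The main point needing care is the conditioning. The paper's statements condition on \((\widehat f,Z_t)\). Exchangeability, however, gives a guarantee that is marginal over the calibration sample and conditional only on \(\widehat f\). Conditioning on \(Z_t\) could break the symmetry unless the features are themselves part of the exchangeable tuple. I will therefore state the recovered guarantee as marginal coverage conditional on \(\widehat f\) (the usual split conformal statement). If desired, I will add the classical upper bound \(1-\alpha+1/(N+1)\) under continuous scores with no ties.
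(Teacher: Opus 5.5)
Your proposal is correct, and it is the argument the paper's assertion relies on. The paper gives no proof of this corollary. It says the result is included ``only to orient the reader,'' reducing uniform weights to split conformal and appealing to the standard guarantee. Your reduction plus the rank lemma (the \(\lceil(1-\alpha)(N+1)\rceil\) correction, the \(+\infty\) convention, and the tie-robust counting) fills in exactly what is left implicit.

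Your version also adds precision that the paper's wording lacks. You are right not to route through Theorem 1, which can never give more than \(1-\alpha-LB_t-\varepsilon_n\). You are also right that what is recovered is marginal coverage given \(\widehat f\), not coverage conditional on \((\widehat f,Z_t)\) as in the paper's other displays.

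One side remark should be corrected, although your proof does not use it. Conditioning on \(Z_t\) breaks the symmetry even when the features \emph{are} part of the exchangeable tuple, as soon as they carry information about the residuals. Suppose the pairs \((Z_i,R_i)\), \(i\in\Ical\cup\{t\}\), are i.i.d.\ and the law of \(R\) given \(Z\) varies with \(Z\), which is the situation Assumption 1 contemplates. Then the uniform-weight quantile tracks the marginal quantile of \(R\), and coverage given \(Z_t=z\) can fall strictly below \(1-\alpha\). This is the conditional-coverage impossibility the paper cites at the start of its theory section. Conditioning on \(Z_t\) is harmless only when the scores remain exchangeable given \(Z_t\), for instance when the features are independent of them.

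Finally, state explicitly that the reduction concerns the fixed-level interval \(\widehat C_t\). With the adaptive update switched on, uniform weights give ACI, whose data-dependent \(\alpha_t\) does not inherit the exact one-step guarantee.
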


The corollary is included only to orient the reader. The main interest of the paper is the non-exchangeable case, where uniform weights can be inefficient or badly calibrated.

\begin{proposition}[Long-run calibration of the adaptive update]
\label{prop:adaptive-long-run}
Run the update
\[
    \alpha_{t+1}=\alpha_t+\gamma(\alpha-M_t)
\]
for \(t=1,\ldots,T\) with \(\alpha_1\in[0,1]\) and \(0<\gamma\le 1\), without truncation, under the conventions that the interval is the whole real line when \(\alpha_t\le 0\) (so \(M_t=0\)) and empty when \(\alpha_t\ge 1\) (so \(M_t=1\)). Then the exact identity
\[
    \frac{1}{T}\sum_{t=1}^T M_t
    =\alpha+\frac{\alpha_1-\alpha_{T+1}}{\gamma T}
\]
holds, the iterates satisfy \(\alpha_t\in[-\gamma,1+\gamma]\) for every \(t\), and therefore
\[
    \left|\frac{1}{T}\sum_{t=1}^T M_t-\alpha\right|\le \frac{1+\gamma}{\gamma T}
\]
for every data sequence, without independence, stationarity, or a correctly specified forecasting model.
\end{proposition}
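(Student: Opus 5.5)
The plan is a telescoping argument for the identity, an induction for the iterate bound, and then a combination of the two. Everything is deterministic in the sequence \(M_1,\ldots,M_T\), so no probabilistic assumption enters.

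\emph{Step 1 (exact identity).} Summing the update \(\alpha_{s+1}-\alpha_s=\gamma(\alpha-M_s)\) over \(s=1,\ldots,T\) telescopes to
\[
\alpha_{T+1}-\alpha_1=\gamma T\alpha-\gamma\sum_{t=1}^T M_t.
\]
Dividing by \(\gamma T\) and rearranging gives the displayed identity. This step uses nothing about the data: it holds for any binary sequence \(M_t\) fed into the recursion.

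\emph{Step 2 (iterate bound by induction).} I would show \(\alpha_t\in[-\gamma,1+\gamma]\) by induction on \(t\), splitting into three cases according to where \(\alpha_t\) sits. The base case holds because \(\alpha_1\in[0,1]\). In what follows, \(\alpha\in(0,1)\) is the nominal level.

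\begin{itemize}
\item If \(\alpha_t\le 0\), the convention forces \(M_t=0\). Then \(\alpha_{t+1}=\alpha_t+\gamma\alpha\), which is at least \(\alpha_t\ge-\gamma\) and at most \(\gamma\alpha\le 1+\gamma\).
\item If \(\alpha_t\ge 1\), the convention forces \(M_t=1\). Then \(\alpha_{t+1}=\alpha_t-\gamma(1-\alpha)\), which is at most \(\alpha_t\le 1+\gamma\) and at least \(1-\gamma(1-\alpha)\ge -\gamma\).
\item If \(\alpha_t\in(0,1)\), the increment \(\gamma(\alpha-M_t)\) lies in \([-\gamma,\gamma]\). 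Hence \(\alpha_{t+1}\in(-\gamma,1+\gamma)\).
\end{itemize}

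This case analysis is the only step needing care. Its crux is that the conventions make the update push back toward \([0,1]\) whenever the iterate leaves it, so overshoot is limited to one step of size at most \(\gamma\). The first bullet also shows that an iterate already in \([-\gamma,0]\) only moves upward, so it never drifts below \(-\gamma\).

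\emph{Step 3 (conclusion).} Both \(\alpha_1\) and \(\alpha_{T+1}\) lie in \([-\gamma,1+\gamma]\), so \(|\alpha_1-\alpha_{T+1}|\le 1+2\gamma\). Substituting into the identity gives a bound of \((1+2\gamma)/(\gamma T)\), which is weaker than the stated \((1+\gamma)/(\gamma T)\). To sharpen it, I would use \(\alpha_1\in[0,1]\) instead of the full range. If \(\alpha_{T+1}\le\alpha_1\), then \(\alpha_1-\alpha_{T+1}\le 1-(-\gamma)=1+\gamma\). Otherwise \(\alpha_{T+1}-\alpha_1\le(1+\gamma)-0=1+\gamma\). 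In both cases \(|\alpha_1-\alpha_{T+1}|\le 1+\gamma\).

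Dividing by \(\gamma T\) gives the claimed deterministic bound, valid for every data sequence.
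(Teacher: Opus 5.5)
Your proposal is correct and follows essentially the same route as the paper: telescope the update to get the exact identity, use the boundary conventions in a three-case induction to keep $\alpha_t\in[-\gamma,1+\gamma]$, and then use $\alpha_1\in[0,1]$ rather than the full range to get $|\alpha_1-\alpha_{T+1}|\le 1+\gamma$. The paper does this last step directly rather than first stating the cruder $1+2\gamma$ bound, but this is only a difference in presentation.
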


\begin{proof}
Summing \(\alpha_{t+1}-\alpha_t=\gamma(\alpha-M_t)\) over \(t=1,\ldots,T\) gives
\[
    \alpha_{T+1}-\alpha_1=\gamma T\alpha-\gamma\sum_{t=1}^T M_t,
\]
and rearranging proves the identity. The range claim follows by induction. If \(\alpha_t\in[0,1]\), the update moves by at most \(\gamma\), so \(\alpha_{t+1}\in[-\gamma,1+\gamma]\). If \(\alpha_t\in[-\gamma,0]\), the interval covers by convention, so \(M_t=0\) and \(\alpha_{t+1}=\alpha_t+\gamma\alpha\in[\alpha_t,\alpha_t+\gamma]\subset[-\gamma,1+\gamma]\). If \(\alpha_t\in[1,1+\gamma]\), the interval misses by convention, so \(M_t=1\) and \(\alpha_{t+1}=\alpha_t-\gamma(1-\alpha)\in[\alpha_t-\gamma,\alpha_t]\subset[-\gamma,1+\gamma]\). Since \(\alpha_1\in[0,1]\) and \(\alpha_{T+1}\in[-\gamma,1+\gamma]\), we get \(|\alpha_1-\alpha_{T+1}|\le 1+\gamma\), which gives the bound.
\end{proof}

\begin{remark}[What the adaptive result does and does not say]
The proposition is deterministic and holds for every sample path; the argument follows the analysis of adaptive conformal inference \citep{gibbs2021adaptive}. It says that the feedback rule forces the average miss rate toward the target at rate \(1/T\) no matter how the data behave. This is a long-run statement, not a conditional coverage statement. It does not guarantee that every season, every regime, or every subgroup has exact coverage. For that reason, we report subgroup coverage in the simulations and in the real-data examples.
\end{remark}

The theorem and proposition explain the two parts of the hybrid method. Spectral weights reduce mismatch between calibration residuals and the test point. The adaptive update corrects persistent over- or under-coverage over time. The empirical question is whether the combination gives a better coverage-width tradeoff than either component alone. The next sections answer that question in simulations and in four real datasets.

\section{Simulation Study}

We compare six methods: split conformal, rolling conformal, exponentially weighted conformal, adaptive conformal inference, fixed spectral conformal, and spectral adaptive conformal. The nominal coverage is 90\%. Each simulation uses 250 Monte Carlo repetitions in the corrected causal-feature run.

The simulations are designed to separate two kinds of non-exchangeability. The first design has recurring regimes. In that setting, useful calibration residuals may be far away in time. The second design has a slowly changing frequency. In that setting, the local structure changes gradually, so time recency and spectral similarity can both be helpful. These designs are simple, but they create the two situations where the proposed method should be judged: recurring structure and gradual spectral drift.

For each method we report empirical coverage and average interval width. Coverage alone is not enough, because an excessively wide interval can cover almost everything. Width alone is not enough, because a very narrow interval can undercover badly. The main comparison is therefore the coverage-width tradeoff.

\begin{figure}[t]
\centering
\includegraphics[width=0.98\linewidth]{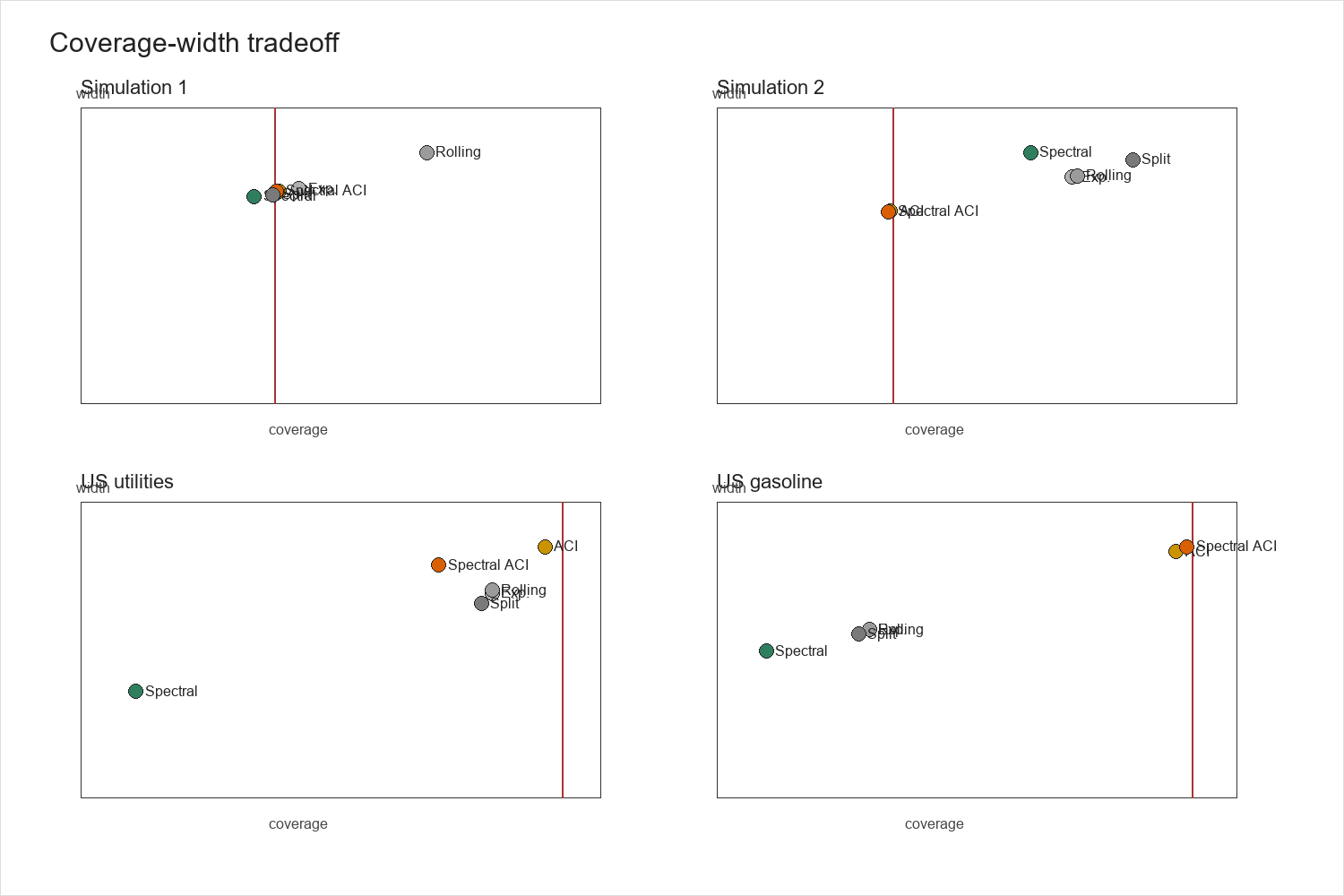}
\caption{Coverage-width tradeoff across the two simulations and the first two U.S. real-data examples. Points closer to the 90\% coverage line with smaller width are preferable.}
\label{fig:coverage-width-tradeoff}
\end{figure}

\begin{figure}[t]
\centering
\begin{tikzpicture}[x=1cm,y=5cm]
\draw[->] (0,0.82) -- (0,0.98) node[above] {Coverage};
\draw[->] (0,0.82) -- (8.8,0.82);
\draw[dashed] (0,0.90) -- (8.5,0.90) node[right]{0.90};
\foreach \x/\h/\c/\lab in {0.5/0.8994/splitgray/Split,1.7/0.9012/acigold/ACI,2.9/0.9004/hybridorange/Spec. ACI,4.4/0.9698/splitgray/Split,5.6/0.8993/acigold/ACI,6.8/0.8989/hybridorange/Spec. ACI}{
  \fill[\c] (\x,0.82) rectangle +(0.65,\h-0.82);
  \node[rotate=45,anchor=east,font=\scriptsize] at (\x+0.45,0.815) {\lab};
}
\node[font=\small] at (1.9,0.975) {Simulation 1};
\node[font=\small] at (6.0,0.975) {Simulation 2};
\end{tikzpicture}
\caption{Overall empirical coverage in the two simulation studies. The hybrid spectral adaptive method stays close to the 90\% target in both designs.}
\label{fig:simulation-coverage}
\end{figure}

\subsection{Simulation 1: recurring regimes}

The first simulation uses four recurring regimes. Each regime has its own oscillatory pattern and noise level. The regimes return after long gaps. This setting is designed for spectral weighting, because relevant calibration points may be far away in time but close in spectral behavior.

The point predictor is intentionally simple. It is not allowed to fully identify the regime. This makes the conformal calibration step important. If the point predictor already removed all nonstationarity, then the calibration problem would be much easier. Here the residuals retain enough structure that calibration methods can be meaningfully compared.

\begin{table}[h]
\centering
\caption{Simulation 1 overall results. Nominal coverage is 90\%.}
\label{tab:sim1overall}
\begin{tabular}{lrrrrr}
\toprule
Method & Coverage & Average width & Median width & Eff. sample size & Bandwidth \\
\midrule
ACI & 0.9012 & 3.2504 & 2.9502 & -- & -- \\
Exponential & 0.9069 & 3.2837 & 3.2672 & -- & -- \\
Rolling & 0.9436 & 3.8360 & 3.8142 & -- & -- \\
Spectral & 0.8941 & 3.1617 & 3.1438 & 351.99 & 0.2436 \\
Spectral ACI & 0.9004 & 3.2467 & 2.9718 & 517.80 & 0.2436 \\
Split & 0.8994 & 3.1935 & 3.1851 & -- & -- \\
\bottomrule
\end{tabular}
\end{table}

The hybrid method reaches nominal overall coverage. In the hardest regime, its coverage is 84.84\%, compared with 84.62\% for ACI and 73.10\% for split conformal. This is evidence that combining spectral weights with adaptive updating can help in the difficult recurring regime, although the gain over ACI alone is modest in this design.

\begin{figure}[h]
\centering
\includegraphics[width=0.9\linewidth]{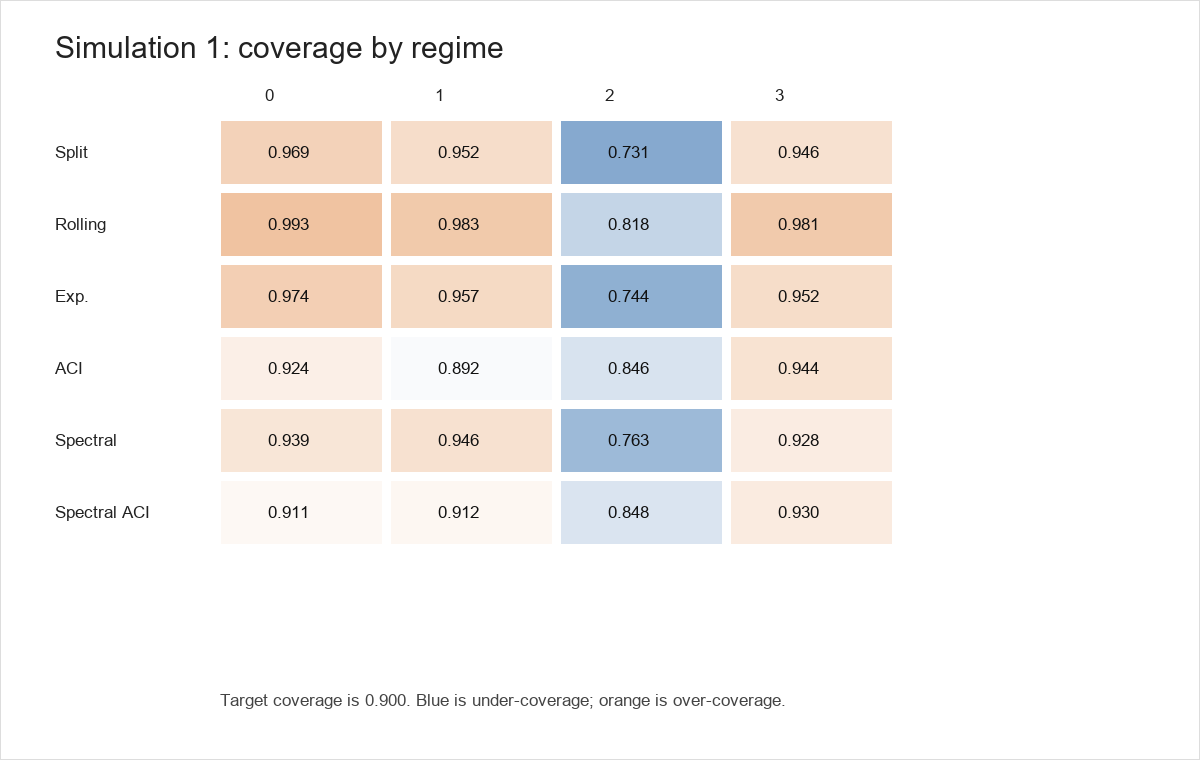}
\caption{Regime-specific coverage in Simulation 1. Red cells show under-coverage, gold cells show over-coverage, and green cells are close to the 90\% target.}
\label{fig:sim1-regime-heatmap}
\end{figure}

Figure \ref{fig:sim1-regime-heatmap} shows why the average result is not the whole story. Several methods look acceptable on average but fail in a particular regime. This is common in nonstationary problems. A method can have good marginal coverage while still being unreliable in the region where a user most needs accurate uncertainty. Spectral ACI is not perfect in every regime, but it gives the most balanced behavior among the methods considered.

\subsection{Simulation 2: slowly changing frequency}

The second simulation uses a chirp-like signal whose frequency changes slowly over time. The noise level is larger when the local frequency is higher.

This design is different from Simulation 1. There are no clean recurring blocks. Instead, the local behavior moves gradually through feature space. A purely spectral method can become conservative because it compares points across a continuum of local frequencies. A purely recent method can work well when the frequency changes slowly, but it may still have trouble when the residual scale changes.

\begin{table}[h]
\centering
\caption{Simulation 2 overall results. Nominal coverage is 90\%.}
\label{tab:sim2overall}
\begin{tabular}{lrrrrr}
\toprule
Method & Coverage & Average width & Median width & Eff. sample size & Bandwidth \\
\midrule
ACI & 0.8993 & 3.2584 & 3.1473 & -- & -- \\
Exponential & 0.9523 & 3.8366 & 3.8218 & -- & -- \\
Rolling & 0.9536 & 3.8601 & 3.8601 & -- & -- \\
Spectral & 0.9401 & 4.2542 & 4.1291 & 151.58 & 0.1508 \\
Spectral ACI & 0.8989 & 3.2503 & 3.2045 & 287.97 & 0.1508 \\
Split & 0.9698 & 4.1346 & 4.1361 & -- & -- \\
\bottomrule
\end{tabular}
\end{table}

Here ACI and spectral ACI perform similarly overall. The hybrid has coverage 89.89\% and average width 3.25. It also gives balanced coverage across frequency bins: 89.60\%, 89.99\%, and 91.20\%. Fixed spectral weighting is too conservative in this design. The adaptive update is therefore essential.

\begin{figure}[h]
\centering
\includegraphics[width=0.9\linewidth]{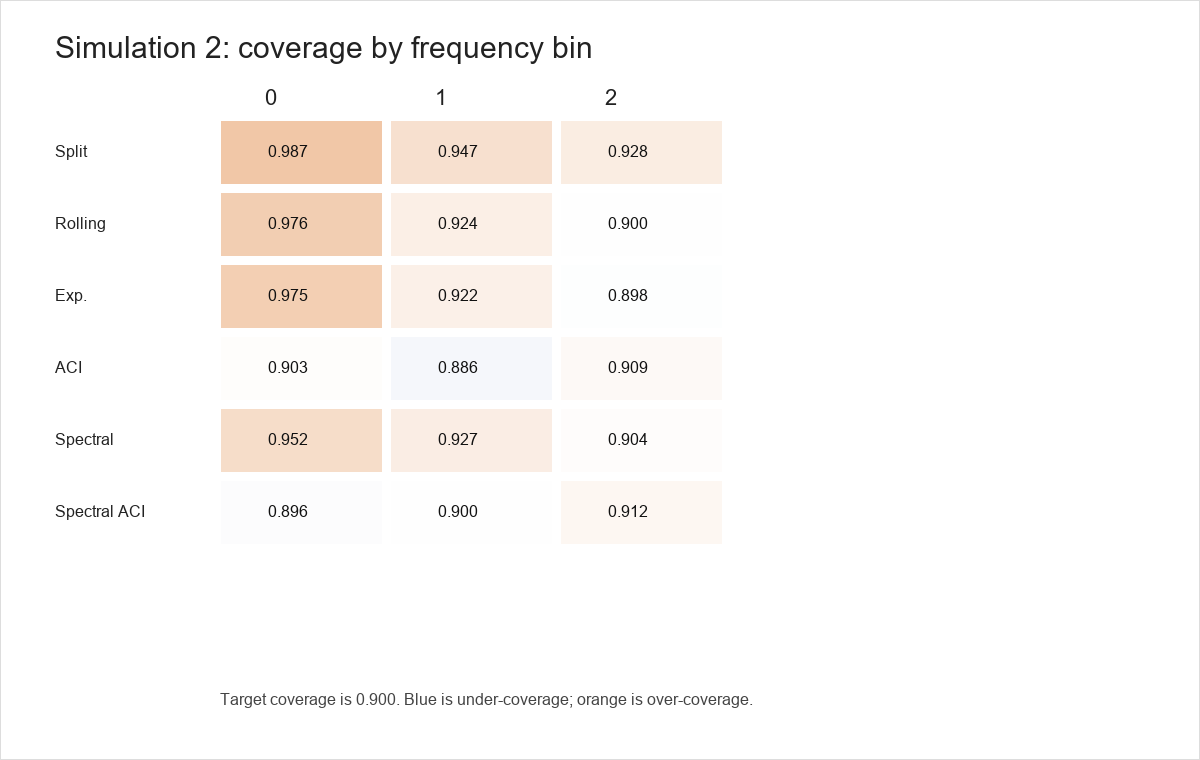}
\caption{Frequency-bin coverage in Simulation 2. Spectral ACI stays close to the target across the three frequency bins.}
\label{fig:sim2-frequency-heatmap}
\end{figure}

Taken together, the simulations show that the proposed method is not simply a way to make intervals wider. In Simulation 1, it improves the difficult regime relative to split conformal. In Simulation 2, it avoids the over-coverage of split, rolling, and fixed spectral methods while keeping the coverage close to nominal. The strongest pattern is that fixed spectral weighting should be paired with adaptive updating.

\begin{table}[h]
\centering
\caption{Monte Carlo standard errors for overall simulation coverage and average width. Each simulation uses 250 repetitions.}
\label{tab:mcse}
\begin{tabular}{llrrrr}
\toprule
Experiment & Method & Coverage & MCSE & Avg. width & MCSE \\
\midrule
Simulation 1 & ACI & 0.9012 & 0.0001 & 3.2504 & 0.0061 \\
Simulation 1 & Spectral & 0.8940 & 0.0012 & 3.1617 & 0.0084 \\
Simulation 1 & Spectral ACI & 0.9004 & 0.0001 & 3.2467 & 0.0064 \\
Simulation 1 & Split & 0.8994 & 0.0009 & 3.1935 & 0.0065 \\
Simulation 2 & ACI & 0.8993 & 0.0001 & 3.2584 & 0.0061 \\
Simulation 2 & Spectral & 0.9401 & 0.0043 & 4.2542 & 0.0462 \\
Simulation 2 & Spectral ACI & 0.8989 & 0.0002 & 3.2503 & 0.0061 \\
Simulation 2 & Split & 0.9698 & 0.0005 & 4.1346 & 0.0081 \\
\bottomrule
\end{tabular}
\end{table}

Table \ref{tab:mcse} shows that the main simulation conclusions are not numerical noise. The Monte Carlo standard errors for ACI and spectral ACI coverage are very small. The fixed spectral method has larger Monte Carlo uncertainty in Simulation 2 because the selected spectral neighborhoods vary more across repetitions, but its over-coverage and larger width are still clear.

\begin{figure}[t]
\centering
\begin{tikzpicture}[x=1cm,y=5cm]
\draw[->] (0,0.50) -- (0,0.92) node[above] {Coverage};
\draw[->] (0,0.50) -- (7.4,0.50);
\draw[dashed] (0,0.90) -- (7.0,0.90) node[right]{0.90};
\foreach \x/\h/\c/\lab in {0.5/0.8269/splitgray/Split,1.6/0.8846/acigold/ACI,2.7/0.5144/spectralgreen/Spectral,3.8/0.7885/hybridorange/Spec. ACI,4.9/0.8365/splitgray/Rolling,6.0/0.8365/splitgray/Exp.}{
  \fill[\c] (\x,0.50) rectangle +(0.55,\h-0.50);
  \node[rotate=45,anchor=east,font=\scriptsize] at (\x+0.4,0.495) {\lab};
}
\end{tikzpicture}
\caption{Coverage on the U.S. FRED utilities series. In this example, ordinary ACI is closest to the nominal 90\% target; spectral localization is too aggressive.}
\label{fig:realdata-coverage}
\end{figure}

\section{Real-Data Examples}

We also test the methods on four datasets. The first is a monthly U.S. utilities series from FRED. The second is a weekly U.S. gasoline price series from FRED. The third is a daily Seattle weather series. The fourth is a daily German electricity consumption series from Open Power System Data \citep{wiese2019open}. These examples are not meant to be full economic forecasting models. They are used as transparent real-data checks for ordered datasets with seasonal structure, changing volatility, and distribution shift, spanning monthly, weekly, and daily frequencies and two continents. The FRED series are public and reproducible from their FRED identifiers, the weather series is distributed through the public vega-datasets repository, and the German series is distributed through Open Power System Data.

Both examples use deliberately simple point forecasts. This is a choice, not an accident. The purpose of the section is to study the conformal calibration layer. A highly tuned forecasting model could hide some of the calibration difficulty. By using simple regression forecasts with trend, seasonal Fourier terms, and lagged values, the examples keep the focus on how the different conformal methods respond to residual nonstationarity.

The empirical split is chronological. The first 60\% of each series is used to fit the point predictor, the next 20\% is used for calibration, and the final 20\% is held out for testing. No future observations are used when constructing prediction intervals for the test period. This is important because random splits would make the problem artificially easier for ordered data.

\subsection{U.S. electric and gas utilities}

The utilities series is the Federal Reserve industrial production index for electric and gas utilities, NAICS 2211,2 \citep{fredIPG2211A2N}. It is monthly, not seasonally adjusted, and indexed to 2017=100. The task is one-step-ahead monthly prediction. The point forecast uses a simple regression with trend, monthly Fourier terms, and lagged values at one month and twelve months. The first 60\% of the series is used for training, the next 20\% for calibration, and the final 20\% for testing. We use the downloaded FRED values as a univariate ordered series.

\begin{table}[h]
\centering
\caption{U.S. FRED electricity-related real-data results. Nominal coverage is 90\%.}
\label{tab:realdata}
\begin{tabular}{lrrrrr}
\toprule
Method & Coverage & Average width & Median width & Eff. sample size & Bandwidth \\
\midrule
ACI & 0.8846 & 14.0829 & 13.3681 & -- & -- \\
Exponential & 0.8365 & 11.4560 & 11.4560 & -- & -- \\
Rolling & 0.8365 & 11.6562 & 11.6562 & -- & -- \\
Spectral & 0.5144 & 5.9645 & 4.8006 & 8.07 & 0.0400 \\
Spectral ACI & 0.7885 & 13.0532 & 10.9480 & 7.09 & 0.0400 \\
Split & 0.8269 & 10.8861 & 10.8861 & -- & -- \\
\bottomrule
\end{tabular}
\end{table}

Table \ref{tab:realdata} shows a limitation of spectral localization. Ordinary ACI gives the best overall coverage among the methods considered, reaching 88.46\%, close to the 90\% target. Split conformal, rolling conformal, and exponentially weighted conformal all under-cover. Fixed spectral weighting gives very narrow intervals, but its coverage is far too low. Spectral ACI improves on fixed spectral weighting, but it still under-covers because the spectral weights have very small effective sample size.

The seasonal breakdown tells the same story. Fixed-width methods under-cover strongly in one season and over-cover in others. Spectral ACI improves on fixed spectral weighting, but it does not rescue the utilities example because the effective calibration pool is too small.

\begin{figure}[h]
\centering
\includegraphics[width=0.9\linewidth]{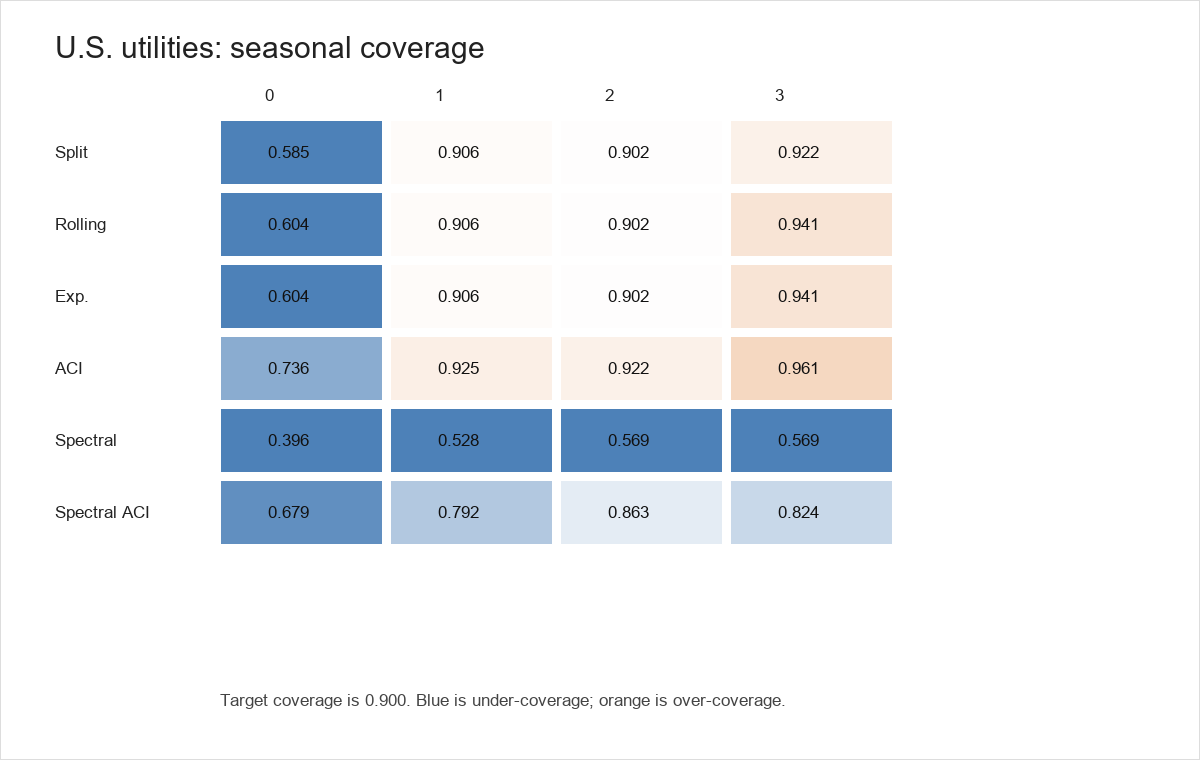}
\caption{Seasonal coverage in the U.S. utilities example. The seasonal breakdown shows that overall coverage can hide important subgroup behavior.}
\label{fig:utilities-season-heatmap}
\end{figure}

Figure \ref{fig:utilities-season-heatmap} shows that the main problem is not only average coverage. Several methods have uneven seasonal behavior. Split conformal, for example, has a weak season where its coverage is far below the nominal level. Spectral ACI improves over fixed spectral weighting, but it does not solve the utilities example. This is a useful diagnostic failure: the selected spectral bandwidth is very small and the effective sample size is low.

\begin{figure}[h]
\centering
\includegraphics[width=0.95\linewidth]{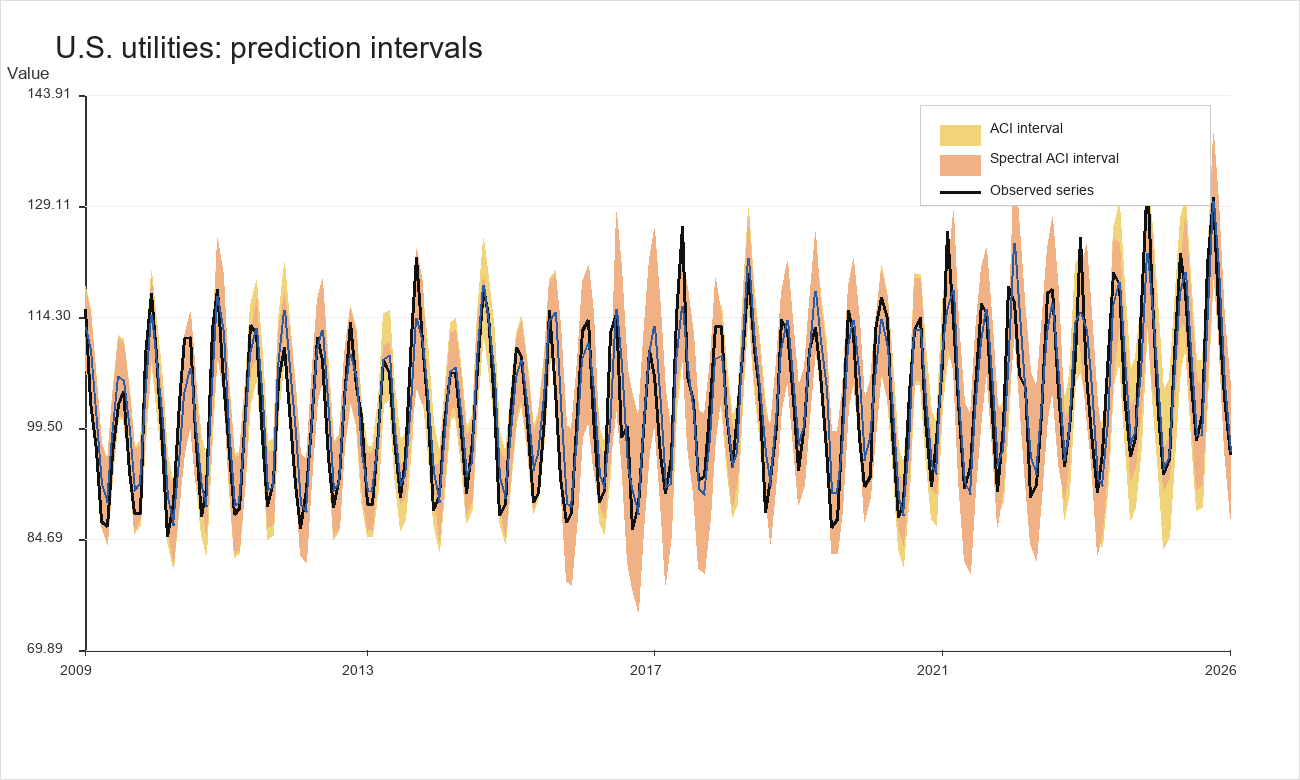}
\caption{Prediction intervals for the final test period of the U.S. utilities example. The pale band shows the 90\% spectral adaptive conformal interval.}
\label{fig:utilities-intervals}
\end{figure}

\subsection{U.S. regular gasoline prices}

The second dataset is the weekly U.S. regular all formulations gasoline price series from the U.S. Energy Information Administration, retrieved through FRED \citep{fredGASREGW}. The series is measured in dollars per gallon, is not seasonally adjusted, and is reported weekly. This example is useful because gasoline prices have clear regime changes and periods of sharp movement. As in the utilities example, the empirical target is interval calibration rather than a fully optimized economic forecasting model.

The task is one-step-ahead weekly prediction. The point forecast uses trend terms, weekly Fourier terms, and lagged values at one week, four weeks, and fifty-two weeks. We use the same 60\%--20\%--20\% training, calibration, and test split.

\begin{table}[h]
\centering
\caption{U.S. FRED gasoline real-data results. Nominal coverage is 90\%.}
\label{tab:gasoline}
\begin{tabular}{lrrrrr}
\toprule
Method & Coverage & Average width & Median width & Eff. sample size & Bandwidth \\
\midrule
ACI & 0.8843 & 0.3141 & 0.3159 & -- & -- \\
Exponential & 0.5730 & 0.2145 & 0.2145 & -- & -- \\
Rolling & 0.5730 & 0.2145 & 0.2145 & -- & -- \\
Spectral & 0.4683 & 0.1874 & 0.1980 & 46.30 & 0.06 \\
Spectral ACI & 0.8953 & 0.3206 & 0.3085 & 62.22 & 0.06 \\
Split & 0.5620 & 0.2087 & 0.2087 & -- & -- \\
\bottomrule
\end{tabular}
\end{table}

Table \ref{tab:gasoline} gives a sharper stress test than the utilities example. Split conformal, rolling conformal, and exponentially weighted conformal all miss the 90\% target by a wide margin. Fixed spectral weighting is too narrow and badly under-covers. The two adaptive methods are much closer to the target, and spectral ACI gives 89.53\% coverage with slightly wider average intervals than ordinary ACI.

\begin{figure}[h]
\centering
\includegraphics[width=0.9\linewidth]{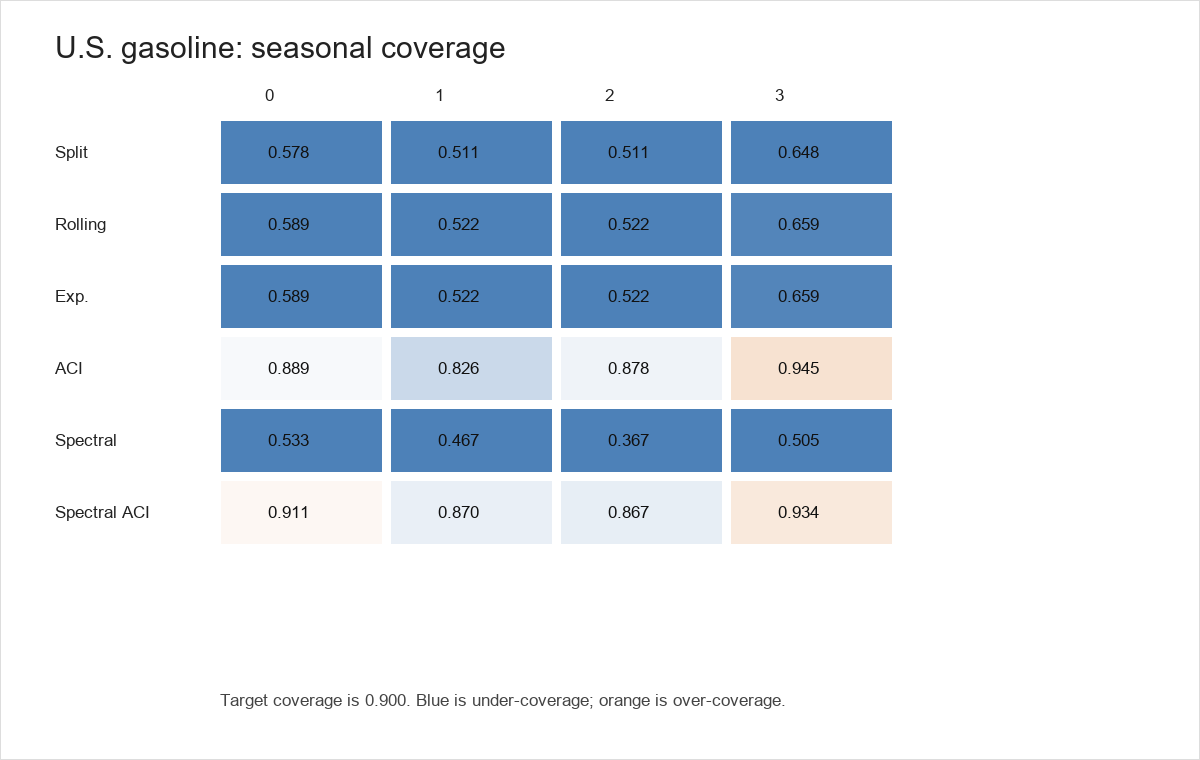}
\caption{Seasonal coverage in the U.S. gasoline example. Adaptive methods are much closer to the target than non-adaptive methods.}
\label{fig:gasoline-season-heatmap}
\end{figure}

The gasoline example is the clearest empirical case where the hybrid method helps in this paper. The non-adaptive methods are too confident during the test period. Split conformal covers only 56.20\%, rolling conformal covers 57.30\%, and exponential conformal covers 57.30\%. These are not small misses. They show that a calibration set from the past can be badly mismatched to a later period. The adaptive methods respond by increasing interval width after misses. Spectral ACI also uses local spectral information, and in this example it reaches 89.53\% coverage, slightly closer to the 90\% target than ordinary ACI. We treat this as evidence that spectral weighting can help in some regimes, not as a claim that it dominates ACI on every time series.

\begin{figure}[h]
\centering
\includegraphics[width=0.95\linewidth]{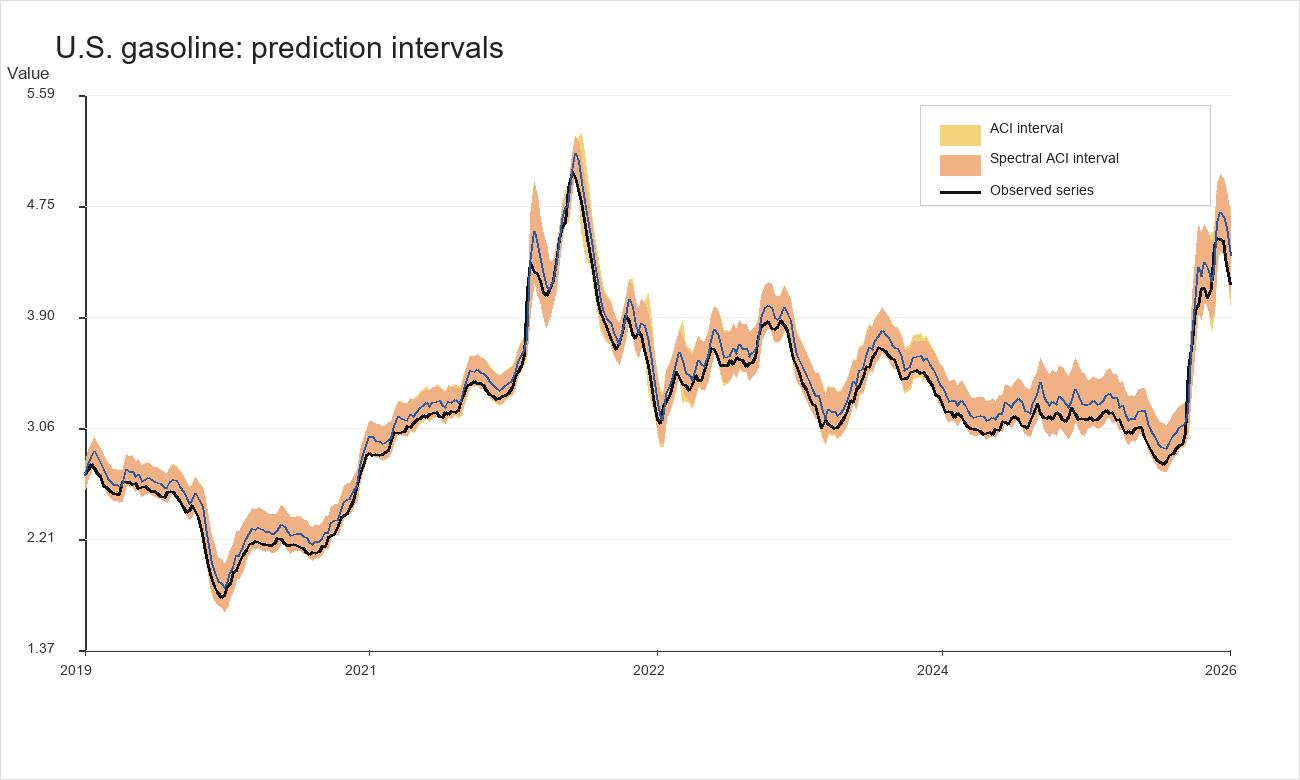}
\caption{Prediction intervals for the U.S. gasoline price example. The gasoline series has sharper regime changes than the utilities series, making this a harder empirical check.}
\label{fig:gasoline-intervals}
\end{figure}

\subsection{Seattle daily maximum temperature}

The third real-data example uses daily Seattle weather observations from the public vega-datasets repository \citep{vegaSeattleWeather}. The response is daily maximum temperature. This example is useful because it is not an economic or energy series, but it still has strong annual structure and short-term weather variation. The task is one-step-ahead daily prediction. The point forecast uses trend terms, annual Fourier terms, and lagged values at one day, seven days, and one year. We use the same chronological 60\%--20\%--20\% training, calibration, and test split.

\begin{table}[h]
\centering
\caption{Seattle daily maximum temperature results. Nominal coverage is 90\%.}
\label{tab:seattle-weather}
\begin{tabular}{lrrrrr}
\toprule
Method & Coverage & Average width & Median width & Eff. sample size & Bandwidth \\
\midrule
ACI & 0.9045 & 9.1345 & 9.2001 & -- & -- \\
Exponential & 0.8955 & 8.9770 & 8.9770 & -- & -- \\
Rolling & 0.8409 & 7.7692 & 7.7692 & -- & -- \\
Spectral & 0.8455 & 7.8209 & 7.8474 & 212.67 & 0.0400 \\
Spectral ACI & 0.9045 & 9.0039 & 9.0603 & 321.06 & 0.0400 \\
Split & 0.8409 & 7.7692 & 7.7692 & -- & -- \\
\bottomrule
\end{tabular}
\end{table}

Table \ref{tab:seattle-weather} gives a useful third check. Split conformal, rolling conformal, and fixed spectral conformal all cover only about 84\%. Exponential weighting is closer to the target, covering 89.55\%. Ordinary ACI and spectral ACI both cover 90.45\%. Spectral ACI has slightly smaller average width than ordinary ACI, but the difference is modest. The main point is that the adaptive update again matters: fixed spectral weighting alone is not enough, while the hybrid method recovers the target coverage without excessive width.

\begin{figure}[h]
\centering
\includegraphics[width=0.9\linewidth]{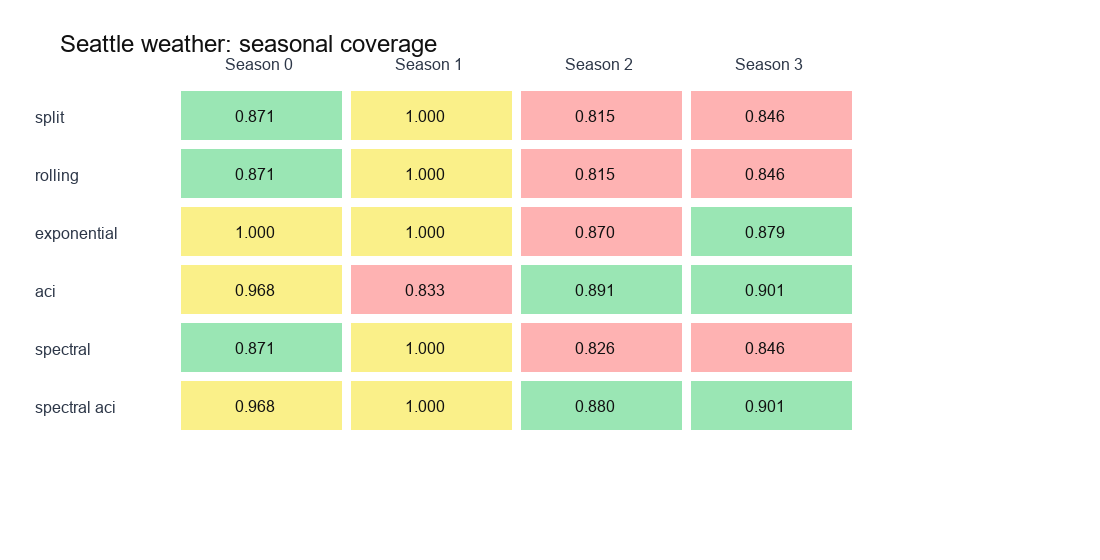}
\caption{Seasonal coverage in the Seattle weather example. The adaptive methods are closer to the nominal target than split, rolling, or fixed spectral conformal methods.}
\label{fig:seattle-season-heatmap}
\end{figure}

\begin{figure}[h]
\centering
\includegraphics[width=0.95\linewidth]{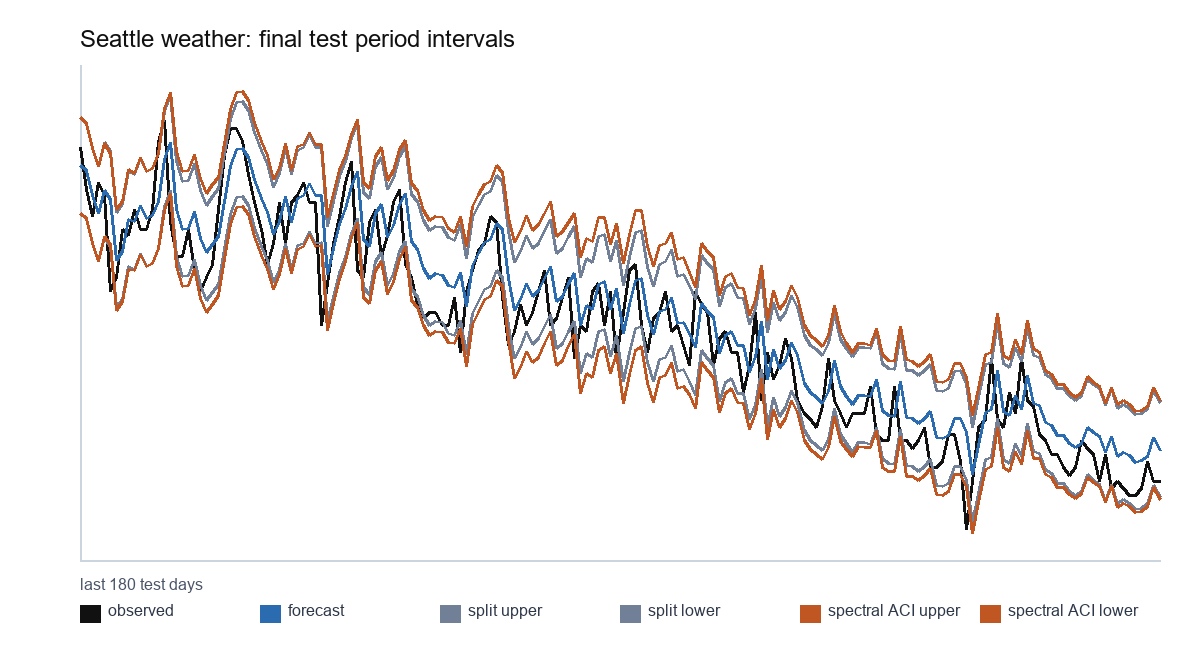}
\caption{Prediction intervals for the final test period of the Seattle weather example. The orange curves show the spectral adaptive conformal bounds.}
\label{fig:seattle-intervals}
\end{figure}

\subsection{German daily electricity consumption}

The fourth real-data example uses the daily Germany dataset from Open Power System Data \citep{wiese2019open}. The response is daily national electricity consumption, a series of about twelve years of daily observations with strong weekly and annual seasonality. This example serves two purposes. It adds a non-U.S. series, and it tests the method in the regime where spectral localization should be easy: a long, densely sampled series whose seasonal structure recurs many times, so that every test point has many spectrally similar calibration points. The task is one-day-ahead prediction. The point forecast uses trend terms, annual Fourier terms, day-of-week terms, and lagged values at one day and seven days. We use the same chronological 60\%--20\%--20\% training, calibration, and test split.

\begin{table}[h]
\centering
\caption{German daily electricity consumption results. Nominal coverage is 90\%.}
\label{tab:germany-electricity}
\begin{tabular}{lrrrrr}
\toprule
Method & Coverage & Average width & Median width & Eff. sample size & Bandwidth \\
\midrule
ACI & 0.8973 & 254.89 & 219.42 & -- & -- \\
Exponential & 0.8858 & 227.20 & 227.20 & -- & -- \\
Rolling & 0.8790 & 224.02 & 224.02 & -- & -- \\
Spectral & 0.8893 & 230.82 & 231.29 & 705.02 & 0.35 \\
Spectral ACI & 0.8973 & 251.12 & 218.43 & 1052.89 & 0.35 \\
Split & 0.8858 & 227.20 & 227.20 & -- & -- \\
\bottomrule
\end{tabular}
\end{table}

Table \ref{tab:germany-electricity} shows the stable-locality regime working as the theory predicts. The validation rule selects a wide bandwidth, 0.35, and the resulting effective sample sizes are very large: the mean exceeds one thousand calibration points for spectral ACI. In this regime, Lemma \ref{lem:bias-reduction} says localization can only help the bias term, and the large effective sample size means the sampling term is negligible. The outcome is that spectral ACI matches ordinary ACI coverage exactly, 89.73\%, while giving slightly narrower average intervals, 251.12 against 254.89. Fixed spectral weighting alone is again slightly below target, 88.93\%, confirming that the adaptive update is what closes the final gap. No method fails badly here, which is itself informative: on a long stationary-seasonal series, the choice among calibration methods matters less, and the spectral machinery gracefully approaches uniform weighting rather than inventing spurious locality.

Across the four real-data examples, three patterns are clear. First, ordinary split conformal can fail badly when the calibration period is not representative of the test period. This is especially clear for gasoline prices and Seattle weather. Second, fixed spectral weighting is not enough on its own. It often gives attractive narrow intervals, but those intervals can undercover. Third, adaptive updating is essential. ACI is the safest baseline in the utilities example, spectral ACI is strongest in the gasoline example, the two adaptive methods are tied on coverage in the Seattle and German examples with spectral ACI slightly narrower in both, and the utilities example is the one case where aggressive localization backfires. This mixed outcome is useful: it shows that spectral information helps only when the resulting local residual pool is stable enough, and the effective sample size tells the two situations apart.

These findings also clarify the role of spectral information. Spectral features should not be treated as magic. They are useful only when they describe a real source of residual heterogeneity. The diagnostics in Figures \ref{fig:utilities-season-heatmap}, \ref{fig:gasoline-season-heatmap}, and \ref{fig:seattle-season-heatmap} are therefore important. If spectral ACI improves subgroup coverage without producing excessive interval width, then the spectral features are probably helping. If it produces very low effective sample sizes or strong under-coverage, then the feature construction or bandwidth choice should be revisited.

\subsection{Additional adaptive benchmark}

To make the empirical comparison less dependent on a single ACI baseline, we also include a multi-window adaptive ACI benchmark. This benchmark runs several rolling calibration windows in parallel and combines their interval widths with online exponential weights. It is meant as a transparent, strongly adaptive comparator in the spirit of recent online conformal work \citep{bhatnagar2023improved}. It is not used to tune spectral ACI.

\begin{table}[h]
\centering
\caption{Additional real-data benchmark against a multi-window adaptive ACI baseline. Nominal coverage is 90\%. The multi-window method is a stronger adaptive baseline because it can shift among several residual-window lengths online.}
\label{tab:multiwindow-benchmark}
\scriptsize
\setlength{\tabcolsep}{4pt}
\begin{tabular}{llrrrr}
\toprule
Dataset & Method & Coverage & Average width & Median width & Eff. sample size \\
\midrule
U.S. utilities & ACI & 0.8846 & 14.0829 & 13.3681 & -- \\
U.S. utilities & Multi-window ACI & 0.8990 & 14.6493 & 14.3961 & -- \\
U.S. utilities & Spectral ACI & 0.7885 & 13.0532 & 10.9480 & 7.09 \\
\midrule
U.S. gasoline & ACI & 0.8843 & 0.3141 & 0.3159 & -- \\
U.S. gasoline & Multi-window ACI & 0.9008 & 0.3235 & 0.3134 & -- \\
U.S. gasoline & Spectral ACI & 0.8953 & 0.3206 & 0.3085 & 62.22 \\
\midrule
Seattle weather & ACI & 0.9045 & 9.1345 & 9.2001 & -- \\
Seattle weather & Multi-window ACI & 0.9091 & 9.6273 & 9.5743 & -- \\
Seattle weather & Spectral ACI & 0.9045 & 9.0039 & 9.0603 & 321.06 \\
\midrule
German electricity & ACI & 0.8973 & 254.8889 & 219.4188 & -- \\
German electricity & Multi-window ACI & 0.8984 & 261.9325 & 221.4494 & -- \\
German electricity & Spectral ACI & 0.8973 & 251.1189 & 218.4347 & 1052.89 \\
\bottomrule
\end{tabular}
\end{table}

Table \ref{tab:multiwindow-benchmark} gives a more demanding comparison. The multi-window adaptive method reaches or approaches the nominal target in all four real-data examples, but it usually pays for this with wider intervals. Spectral ACI is not competitive in the utilities example, and the effective sample size explains why. In the gasoline example, spectral ACI is close to the multi-window method while using slightly narrower average intervals. In the Seattle and German examples, spectral ACI matches ordinary ACI coverage and is narrower than both ACI and the multi-window adaptive benchmark. This is the most balanced reading of the results: spectral ACI is not a universal replacement for adaptive conformal methods, but whenever the effective sample size is healthy, it delivers the same coverage as the strongest adaptive baselines with the narrowest intervals of the three.

\section{Practical Guidance}

This section gives practical advice for using the method. It is included because conformal prediction methods are often judged not only by formal guarantees but also by how easily they can be used without hidden tuning.

\subsection{Choosing the spectral window}

The local window should be long enough to estimate the relevant frequency content but short enough to track change. For monthly data with annual seasonality, a window covering several years is natural. For weekly data with annual seasonality, a window covering several years is also reasonable. If the window is too short, the spectral feature becomes noisy. If it is too long, the feature becomes slow to react to changes.

A useful diagnostic is stability of the selected bandwidth and effective sample size. If small changes in the window length lead to very different bandwidths or very small effective sample sizes, then the feature is probably too noisy. In that case, one can use fewer frequencies, a longer window, or additional smoothing.

\subsection{Reading the effective sample size}

The effective sample size is a simple warning light. A large value means the method is borrowing information from many calibration residuals. A very small value means the method is relying on only a few residuals. Small values are not always bad. If the data contain truly distinct regimes, a local method should focus on the matching regime. But very small values can make weighted quantiles unstable, especially at high coverage levels such as 90\% or 95\%.

In practice, we recommend reporting the average effective sample size and also checking its lower tail. If many test points have very small effective sample size, the bandwidth should be increased or the spectral feature should be simplified.

\begin{table}[h]
\centering
\caption{Effective sample size diagnostics for spectral methods. The lower tail is important because weighted quantiles can become unstable when too few calibration residuals receive meaningful weight.}
\label{tab:ess-diagnostics}
\begin{tabular}{llrrrr}
\toprule
Experiment & Method & Mean & 10th pct. & Median & 90th pct. \\
\midrule
Simulation 1 & Spectral & 351.99 & 102.69 & 311.29 & 688.32 \\
Simulation 1 & Spectral ACI & 517.80 & 132.06 & 442.95 & 1008.53 \\
Simulation 2 & Spectral & 151.58 & 3.55 & 62.88 & 452.11 \\
Simulation 2 & Spectral ACI & 287.97 & 25.43 & 249.72 & 623.20 \\
U.S. utilities & Spectral & 8.07 & 2.74 & 3.70 & 17.98 \\
U.S. utilities & Spectral ACI & 7.09 & 1.09 & 3.33 & 10.79 \\
U.S. gasoline & Spectral & 46.30 & 16.11 & 41.13 & 86.51 \\
U.S. gasoline & Spectral ACI & 62.22 & 17.49 & 53.82 & 115.87 \\
U.S. Seattle weather & Spectral & 212.67 & 210.88 & 213.29 & 213.96 \\
U.S. Seattle weather & Spectral ACI & 321.06 & 232.76 & 321.33 & 409.20 \\
German electricity & Spectral & 705.02 & 579.24 & 734.40 & 805.26 \\
German electricity & Spectral ACI & 1052.89 & 712.27 & 1066.58 & 1457.00 \\
\bottomrule
\end{tabular}
\end{table}

Table \ref{tab:ess-diagnostics} gives a practical explanation for the mixed empirical results. In the simulations, and especially for spectral ACI, the effective sample sizes are usually large enough to support stable quantiles. In the U.S. utilities example, the effective sample size is extremely small, so spectral localization is too aggressive. In the gasoline, Seattle weather, and German electricity examples, the effective sample sizes are larger, and spectral ACI is correspondingly more stable. The ordering of the four real-data examples by effective sample size exactly matches their ordering by spectral ACI performance, which is the pattern the theory predicts.

\subsection{A safeguarded bandwidth rule}

The effective sample size can be used prospectively, not only as a post-hoc diagnostic, because of a simple structural fact: \(n_{\mathrm{eff}}(t)\) depends only on the spectral features of the test point and the calibration set. It requires no test outcomes. It is therefore available at prediction time, before any response is observed. We recommend the following safeguard. After the validation rule proposes a bandwidth, monitor the effective sample size of incoming test points. If the running median falls below a floor \(n_{\min}\), enlarge the bandwidth to the smallest grid value that restores the floor. Lemma \ref{lem:bias-reduction} guarantees that the enlarged bandwidth still cannot have a worse bias term than uniform weighting, so the safeguard trades a controlled amount of locality for quantile stability rather than abandoning localization altogether.

The floor does not need careful tuning. With \(n_{\min}=20\), the safeguard leaves the gasoline, Seattle, and German selections unchanged, since their median effective sample sizes are 53.8, 321.3, and 1066.6. In the utilities example, the proposed bandwidth \(b=0.04\) has median effective sample size 3.3 over the test period, far below the floor, and the safeguard moves the selection to \(b=0.12\), where the median reaches 30.7. Table \ref{tab:sensitivity} shows the effect: spectral ACI coverage in the utilities example rises from 78.8\% to 88.9\%, within about one point of ordinary ACI, while the other examples are unaffected. The one observed failure in the paper is therefore detectable and repairable before any test outcome is seen.

A leave-one-out version of the same check is available even earlier, during calibration: treat each calibration point as a pseudo-test point against the remaining calibration points and examine the resulting effective sample sizes. In the utilities example this check already warns at the proposed bandwidth, with a lower-tail (tenth percentile) effective sample size of 7.5, while the corresponding lower tails for gasoline, Seattle, and Germany are 15.6, 197.7, and 560.4. The utilities calibration median, 25.7, is however noticeably higher than the prediction-time median of 3.3. The gap between the two is itself informative: it reveals that the spectral features of the test period drifted away from the calibration period, which is precisely the situation in which aggressive localization becomes dangerous. In practice we therefore recommend both checks: the leave-one-out lower tail during calibration, and the running prediction-time median after deployment.

\subsection{When to prefer spectral ACI}

Spectral ACI is most useful when three conditions hold. First, the data are ordered and non-exchangeable. Second, the non-exchangeability is related to recurring oscillatory or seasonal behavior. Third, the selected bandwidth still leaves enough effective calibration information for a stable quantile. These conditions are common in energy, climate, traffic, epidemiology, and economic time series, but they should be checked rather than assumed.

The method is less likely to help when the main distribution shift is a one-time level shift with no recurring structure. In that case, a rolling or adaptive method may be enough. It is also less likely to help when the point prediction model already removes almost all structure from the residuals. A good diagnostic is to compare ACI and spectral ACI. If their performance is nearly identical, then the spectral features may not be adding much for that dataset.

\subsection{Sensitivity and robustness checks}

The most important tuning parameter is the spectral bandwidth. A useful robustness check is to rerun the analysis over a small bandwidth grid and record three quantities: test coverage, average interval width, and the lower tail of \(n_{\mathrm{eff}}\). A bandwidth that gives slightly better coverage but a very small effective sample size should be treated cautiously. This is exactly what happens in the utilities example: the spectral method looks attractive by width, but the effective sample size is too small and coverage fails.

The adaptive learning rate \(\gamma\) has a different role. A small \(\gamma\) reacts slowly after misses, while a large \(\gamma\) can make the interval width oscillate. In the experiments we use the same simple value across all real-data examples. This is not claimed to be optimal. It is a deliberate choice to show that the main empirical findings are not coming from aggressive dataset-specific tuning.

Table \ref{tab:sensitivity} reports this check for the three U.S. real-data examples; the German example, with its very large effective sample sizes, is insensitive to these choices. The gasoline and Seattle conclusions are stable: nearby bandwidths and learning rates keep coverage close to 90\%. The utilities example is different. The selected small bandwidth gives low effective sample size and under-coverage. Increasing the bandwidth raises \(n_{\mathrm{eff}}\) and brings coverage much closer to nominal, although with wider intervals. This is exactly the repair performed automatically by the safeguarded bandwidth rule, and it supports the main diagnostic message of the paper: spectral localization should be monitored, not trusted blindly.

\begin{table}[h]
\centering
\caption{Sensitivity of spectral ACI to bandwidth and learning rate on the real-data examples. The selected setting is \(b=0.04,\gamma=0.02\) for utilities and Seattle, and \(b=0.06,\gamma=0.02\) for gasoline. Nominal coverage is 90\%.}
\label{tab:sensitivity}
\scriptsize
\begin{tabular}{llrrrrr}
\toprule
Dataset & Setting & Bandwidth & \(\gamma\) & Coverage & Avg. width & Median \(n_{\mathrm{eff}}\) \\
\midrule
U.S. utilities & selected & 0.04 & 0.02 & 0.788 & 13.05 & 3.33 \\
U.S. utilities & lower \(\gamma\) & 0.04 & 0.01 & 0.784 & 13.17 & 3.33 \\
U.S. utilities & higher \(\gamma\) & 0.04 & 0.04 & 0.784 & 12.92 & 3.33 \\
U.S. utilities & wider bandwidth & 0.08 & 0.02 & 0.870 & 15.47 & 11.57 \\
U.S. utilities & widest bandwidth & 0.12 & 0.02 & 0.889 & 14.63 & 30.66 \\
\midrule
U.S. gasoline & selected & 0.06 & 0.02 & 0.895 & 0.321 & 53.82 \\
U.S. gasoline & lower \(\gamma\) & 0.06 & 0.01 & 0.890 & 0.315 & 53.82 \\
U.S. gasoline & higher \(\gamma\) & 0.06 & 0.04 & 0.887 & 0.323 & 53.82 \\
U.S. gasoline & narrower bandwidth & 0.04 & 0.02 & 0.890 & 0.324 & 26.44 \\
U.S. gasoline & wider bandwidth & 0.08 & 0.02 & 0.890 & 0.316 & 86.05 \\
\midrule
Seattle weather & selected & 0.04 & 0.02 & 0.905 & 9.00 & 321.33 \\
Seattle weather & lower \(\gamma\) & 0.04 & 0.01 & 0.900 & 8.92 & 321.33 \\
Seattle weather & higher \(\gamma\) & 0.04 & 0.04 & 0.905 & 9.19 & 321.33 \\
Seattle weather & wider bandwidth & 0.08 & 0.02 & 0.905 & 9.07 & 327.97 \\
Seattle weather & widest bandwidth & 0.12 & 0.02 & 0.905 & 9.04 & 328.39 \\
\bottomrule
\end{tabular}
\end{table}

\section{Limitations}

The method has several limitations. First, the theory is approximate. The paper does not claim exact finite-sample coverage under arbitrary time dependence. That would be impossible without stronger assumptions. Instead, the theory explains when spectral weighting should reduce calibration mismatch and why adaptive updating controls the long-run miss rate.

Second, the method depends on the quality of the spectral features. Poor features can make the weighted quantile worse than a simpler baseline. This is why the paper reports effective sample sizes and subgroup coverage. Spectral features should be checked, not trusted blindly.

Third, the bandwidth selection rule is simple. The safeguarded rule removes the worst failure mode by enforcing an effective-sample-size floor, but a more stable validation rule, an online bandwidth update, or an aggregation scheme over several bandwidths could still improve performance.

Fourth, the real-data examples are intentionally transparent rather than exhaustive. The four examples span monthly, weekly, and daily frequencies, economic and environmental series, and two continents, but they should not be read as a complete benchmark. A broader empirical study with load, traffic, climate, and public-health series would be needed before making strong claims across time-series domains.

Finally, the present version uses a simple point predictor in order to isolate the calibration layer. This is useful for studying conformal behavior, but it is not the same as a full forecasting competition. With a highly tuned forecasting model, the residual structure may become weaker, and the incremental value of spectral weighting may be smaller. For that reason, the method should be evaluated together with the forecasting model used in the final application.

\section{Reproducibility}

All simulations use fixed random seeds. The scripts that generate the simulation tables, real-data tables, and figures are stored with the manuscript. The U.S. economic datasets are downloaded as CSV files from FRED and saved locally before analysis. The Seattle weather data and the Open Power System Data Germany daily data are saved locally before analysis. The main simulation scripts are \path{simulation_study.py} and \path{simulation_slow_frequency.py}. The empirical scripts are \path{realdata_us_fred_electricity.py}, \path{realdata_us_fred_gasoline.py}, \path{realdata_us_seattle_weather.py}, \path{realdata_electricity.py}, \path{benchmark_multiwindow_aci.py}, and \path{calibration_ess_check.py}. The output files used for the tables are stored in \path{results/}, and the figures used in the manuscript are stored in \path{paper/figures/}. This makes the numerical claims in the paper auditable from the source code and saved outputs. The reported simulation results are from the corrected causal-feature run with 250 Monte Carlo repetitions.

\section{Discussion}

The paper separates two questions that are often mixed together. The first is a relevance question: which calibration residuals describe the present test point? Spectral similarity answers it, and Lemma \ref{lem:bias-reduction} shows the answer is safe on the bias side, since kernel localization can only improve the mismatch bound relative to uniform weighting. The second is a calibration question: is the long-run miss rate on target? The adaptive update answers it with a bound that holds for every sample path. Fixed spectral weighting alone fails the second question when uncertainty changes in scale. Adaptive updating alone ignores the first question. The experiments show that the combination, monitored through the effective sample size and guarded by the bandwidth floor, answers both.

Several extensions are natural. The bandwidth could be selected by a more stable validation rule. The spectral features could be learned or replaced by wavelet features. The theory could be sharpened under explicit mixing or locally stationary assumptions. Further empirical work could add regional electricity demand, traffic, climate, and public-health surveillance series.

The paper also points to a broader lesson. In non-exchangeable problems, conformal prediction should not be used as a black box. The calibration residuals are data, and they have structure. Sometimes the relevant structure is time recency. Sometimes it is season. Sometimes it is local volatility. In this paper, the structure is local spectral behavior. The best method is the one that uses the right structure while still keeping a calibration mechanism that is simple and auditable.

The main message is simple. When data are ordered and non-exchangeable, calibration residuals should not always be treated equally. Spectral similarity gives one principled way to choose relevant residuals. Adaptive updating then keeps the miss rate under control over time.

\appendix

\section{Additional Method Details}

\subsection{Weighted quantile computation}

The weighted quantile used in the paper is computed by sorting the calibration residuals. Let \(R_{(1)}\le \cdots \le R_{(N)}\) be the sorted residuals, and let \(w_{(j)}\) be the weight attached to \(R_{(j)}\). The weighted conformal radius is
\[
    q_t(\alpha)=R_{(k)},\qquad
    k=\min\left\{j:\sum_{\ell=1}^j w_{(\ell)}\ge 1-\alpha\right\}.
\]
This definition makes clear that the method is still a conformal quantile method. The only change is the distribution used for the quantile.

\subsection{Adaptive truncation}

In the experiments, the adaptive level \(\alpha_t\) is kept in a compact interval inside \((0,1)\). This avoids degenerate intervals caused by extreme values of \(\alpha_t\) and is a purely numerical choice. Proposition \ref{prop:adaptive-long-run} shows that truncation is not needed for the guarantee: the untruncated recursion stays in \([-\gamma,1+\gamma]\) automatically and satisfies the unconditional long-run bound. With truncation, the same feedback intuition remains, but the exact telescoping identity acquires boundary terms whenever the clipped level differs from the unclipped level.

\subsection{Choice of score}

The paper uses absolute residual scores,
\[
    R_i=|Y_i-\widehat f(X_i)|.
\]
This gives symmetric prediction intervals around the point forecast. Other scores could be used. For example, one could use quantile regression scores to obtain asymmetric intervals, or scaled residuals to account for a fitted volatility model. The spectral weighting idea is independent of this choice.

\section{Simulation Design Details}

\subsection{Recurring-regime design}

Simulation 1 is built to test whether the method can reuse old but relevant calibration residuals. The data alternate among four recurring regimes. Each regime has a different oscillatory component and a different residual scale. The regimes are not arranged so that the nearest past observations are always the most relevant observations. This is the situation where rolling methods can be wasteful.

The main lesson from Simulation 1 is that average coverage can hide regime-specific under-coverage. The regime heatmap in Figure \ref{fig:sim1-regime-heatmap} shows this directly. Split conformal has acceptable average coverage, but it performs poorly in the hardest regime. Spectral ACI improves that regime while keeping the overall coverage near the target.

\subsection{Slowly changing frequency design}

Simulation 2 is built to test gradual nonstationarity. The signal changes frequency over time, and the residual scale changes with the local frequency. This design is closer to a locally stationary time-series setting. No method can rely only on recurring block labels, because there are no fixed block labels. The method must track a changing local structure.

In this design, split, rolling, and fixed spectral methods are conservative. Spectral ACI gives coverage close to nominal with shorter intervals than those conservative baselines. This supports the idea that adaptive updating is useful even when spectral features are informative.

\section{Additional Empirical Tables}

\begin{table}[h]
\centering
\caption{Seasonal coverage for the U.S. utilities example.}
\label{tab:utilities-season}
\begin{tabular}{lrrrr}
\toprule
Method & Season 0 & Season 1 & Season 2 & Season 3 \\
\midrule
ACI & 0.7358 & 0.9245 & 0.9216 & 0.9608 \\
Exponential & 0.6038 & 0.9057 & 0.9020 & 0.9412 \\
Rolling & 0.6038 & 0.9057 & 0.9020 & 0.9412 \\
Spectral & 0.3962 & 0.5283 & 0.5686 & 0.5686 \\
Spectral ACI & 0.6792 & 0.7925 & 0.8627 & 0.8235 \\
Split & 0.5849 & 0.9057 & 0.9020 & 0.9216 \\
\bottomrule
\end{tabular}
\end{table}

\begin{table}[h]
\centering
\caption{Seasonal coverage for the U.S. gasoline example.}
\label{tab:gasoline-season}
\begin{tabular}{lrrrr}
\toprule
Method & Season 0 & Season 1 & Season 2 & Season 3 \\
\midrule
ACI & 0.8889 & 0.8261 & 0.8778 & 0.9451 \\
Exponential & 0.5889 & 0.5217 & 0.5222 & 0.6593 \\
Rolling & 0.5889 & 0.5217 & 0.5222 & 0.6593 \\
Spectral & 0.5333 & 0.4674 & 0.3667 & 0.5055 \\
Spectral ACI & 0.9111 & 0.8696 & 0.8667 & 0.9341 \\
Split & 0.5778 & 0.5109 & 0.5111 & 0.6484 \\
\bottomrule
\end{tabular}
\end{table}

\begin{table}[h]
\centering
\caption{Seasonal coverage for the Seattle weather example.}
\label{tab:seattle-season}
\begin{tabular}{lrrrr}
\toprule
Method & Season 0 & Season 1 & Season 2 & Season 3 \\
\midrule
ACI & 0.9677 & 0.8333 & 0.8913 & 0.9011 \\
Exponential & 1.0000 & 1.0000 & 0.8696 & 0.8791 \\
Rolling & 0.8710 & 1.0000 & 0.8152 & 0.8462 \\
Spectral & 0.8710 & 1.0000 & 0.8261 & 0.8462 \\
Spectral ACI & 0.9677 & 1.0000 & 0.8804 & 0.9011 \\
Split & 0.8710 & 1.0000 & 0.8152 & 0.8462 \\
\bottomrule
\end{tabular}
\end{table}

\begin{table}[h]
\centering
\caption{Seasonal coverage for the German electricity example.}
\label{tab:germany-season}
\begin{tabular}{lrrrr}
\toprule
Method & Season 0 & Season 1 & Season 2 & Season 3 \\
\midrule
ACI & 0.8915 & 0.8750 & 0.9372 & 0.8864 \\
Exponential & 0.8160 & 0.8696 & 0.9324 & 0.9158 \\
Rolling & 0.8019 & 0.8641 & 0.9275 & 0.9121 \\
Spectral & 0.8255 & 0.8750 & 0.9324 & 0.9158 \\
Spectral ACI & 0.8868 & 0.8750 & 0.9372 & 0.8901 \\
Split & 0.8160 & 0.8696 & 0.9324 & 0.9158 \\
\bottomrule
\end{tabular}
\end{table}

\section{Checklist for Applied Use}

An applied analyst can use the following checklist.

First, fit a point prediction model and save out-of-sample residuals on a calibration period. Second, compute causal local spectral features using only information available before the prediction time. Third, choose a bandwidth grid and check the resulting effective sample sizes. Fourth, compare split conformal, ACI, fixed spectral conformal, and spectral ACI. Fifth, report not only overall coverage and width, but also subgroup coverage over seasons, regimes, or other meaningful slices of the test period.

The final step is important. A method that only reports overall coverage may look better than it really is. Ordered data often fail in specific periods. A serious empirical analysis should show where the method works and where it does not.

\section{Additional Diagnostic Figures}

This appendix collects extra figures that are useful for reading the empirical results. They are not needed for the main argument, but they make the numerical tables easier to interpret.

\begin{figure}[h]
\centering
\includegraphics[width=0.9\linewidth]{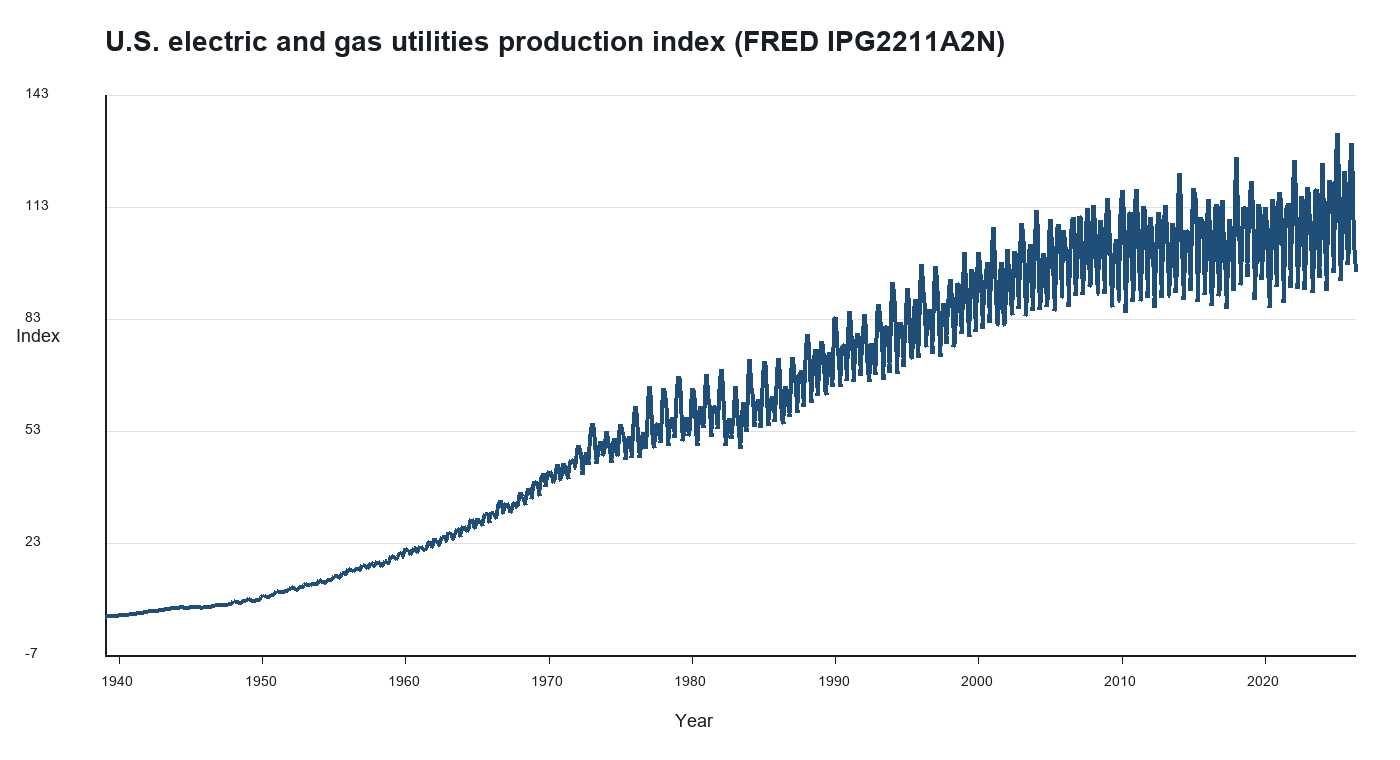}
\caption{U.S. utilities series used in the first real-data example. The visible seasonal movement motivates using local frequency information.}
\label{fig:appendix-utilities-series}
\end{figure}

\begin{figure}[h]
\centering
\includegraphics[width=0.9\linewidth]{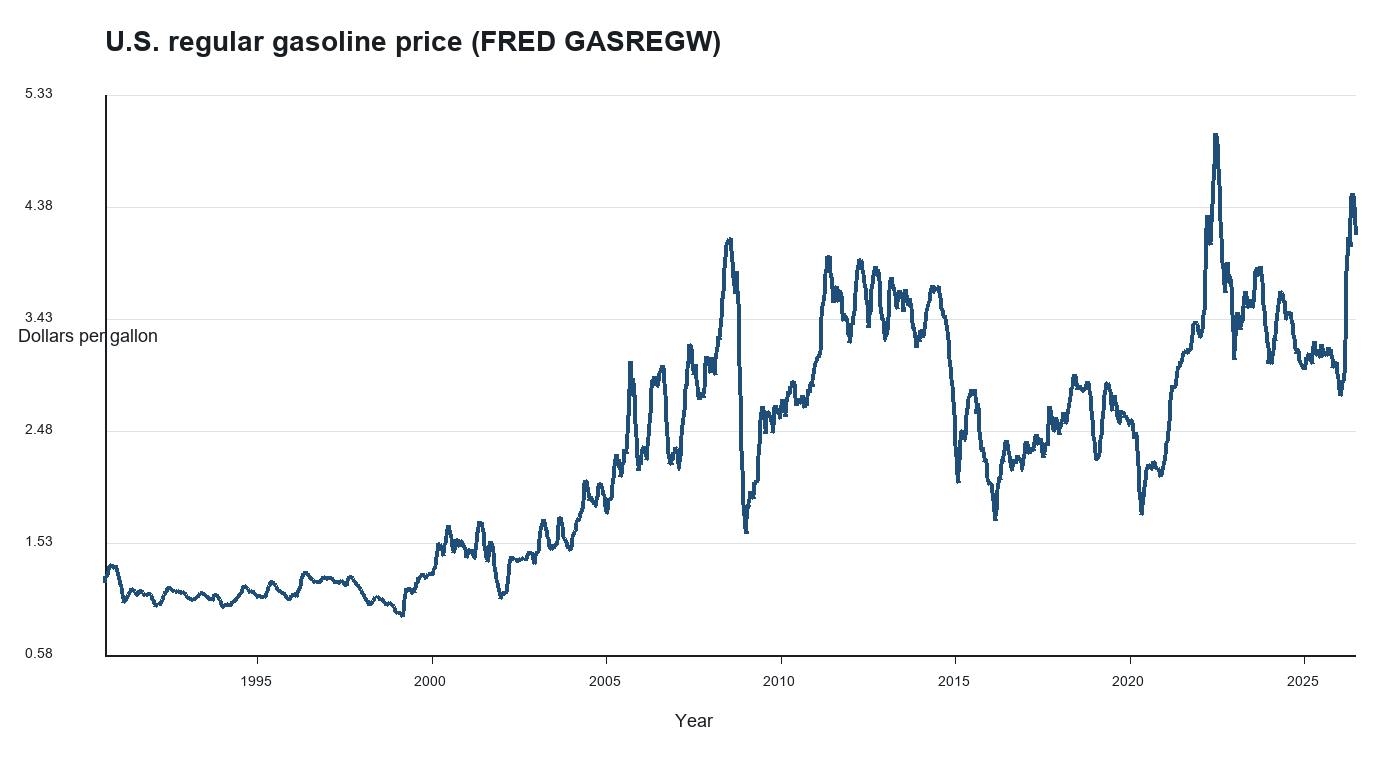}
\caption{U.S. regular gasoline price series used in the second real-data example. The series has calm periods, sharp increases, and sharp declines, which makes it a useful test of adaptive calibration.}
\label{fig:appendix-gasoline-series}
\end{figure}

\begin{figure}[h]
\centering
\includegraphics[width=0.9\linewidth]{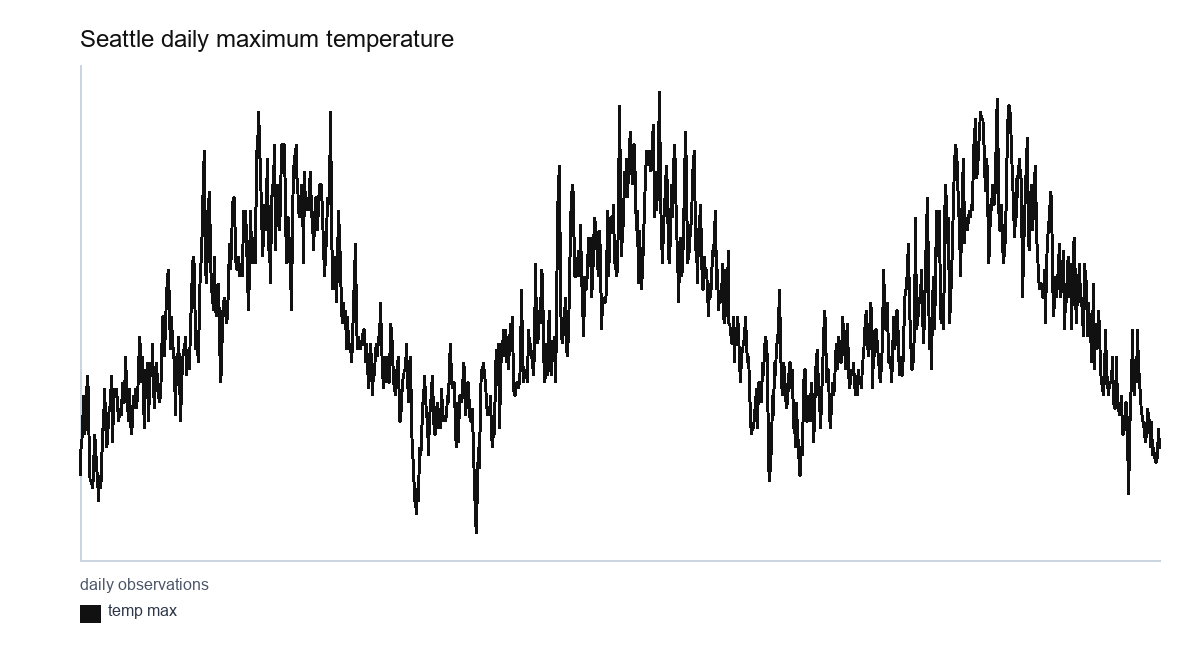}
\caption{Seattle daily maximum temperature series used in the third real-data example. The annual cycle and day-to-day weather variation make it a useful non-economic check.}
\label{fig:appendix-seattle-series}
\end{figure}

Figures \ref{fig:appendix-utilities-series}, \ref{fig:appendix-gasoline-series}, and \ref{fig:appendix-seattle-series} show why these datasets are useful for this paper. The utilities series has a repeated seasonal pattern. The gasoline series has stronger regime changes. The Seattle weather series adds a daily seasonal dataset outside economics. Together they test several kinds of non-exchangeability.

\begin{figure}[h]
\centering
\includegraphics[width=0.9\linewidth]{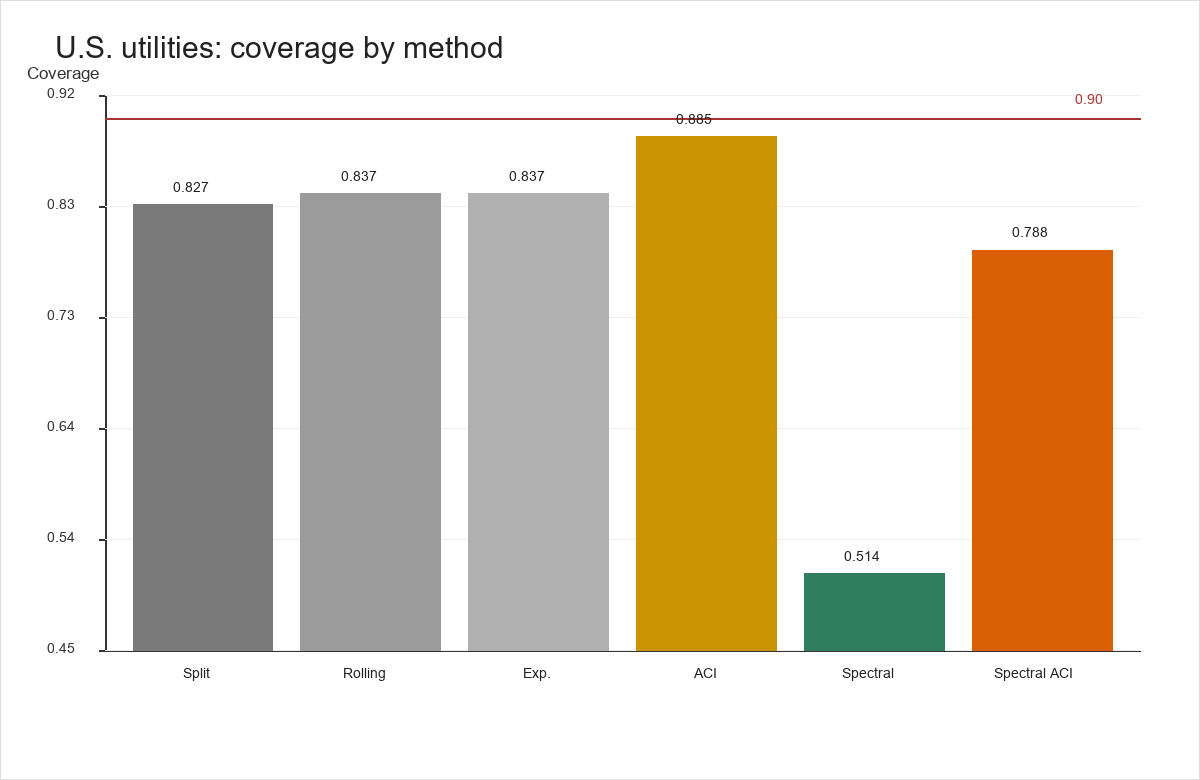}
\caption{Overall coverage by method for the U.S. utilities example. The dashed line marks the 90\% target.}
\label{fig:appendix-utilities-coverage}
\end{figure}

\begin{figure}[h]
\centering
\includegraphics[width=0.9\linewidth]{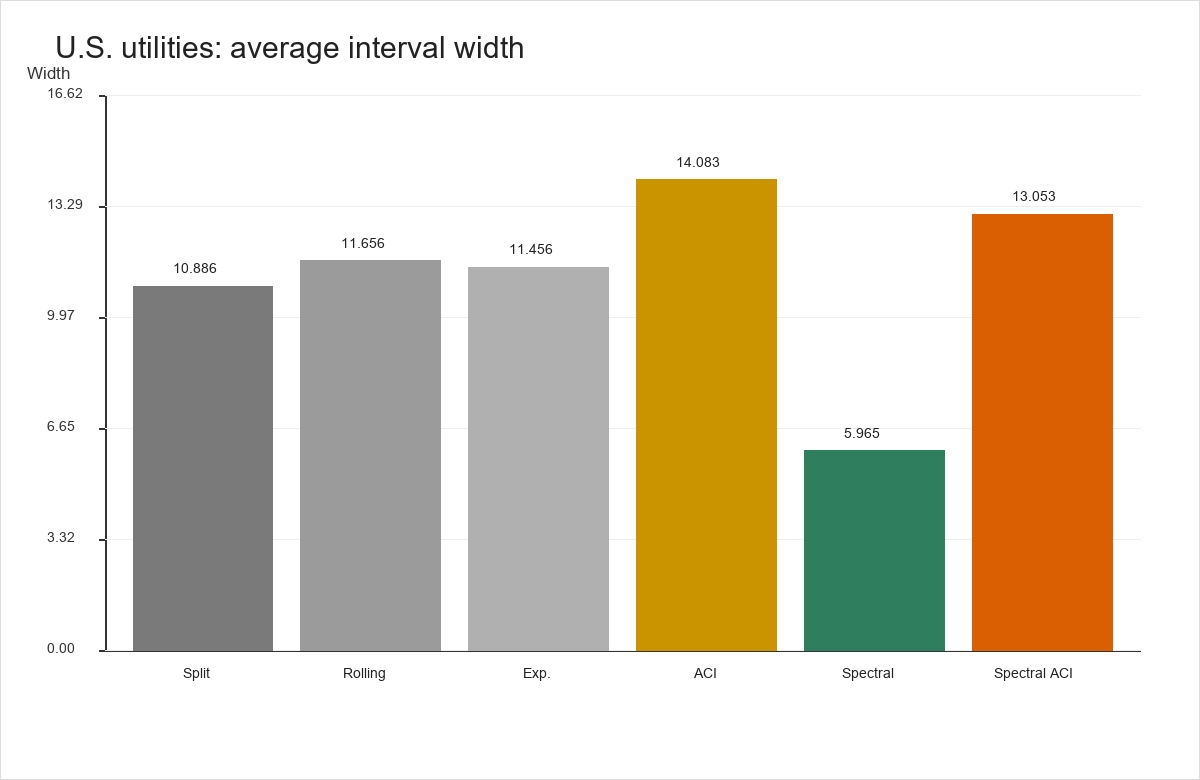}
\caption{Average interval width by method for the U.S. utilities example. Lower width is better only when coverage remains close to the target.}
\label{fig:appendix-utilities-width}
\end{figure}

The utilities coverage and width figures show the main tradeoff. Fixed spectral conformal produces relatively narrow intervals, but its coverage is too low. Spectral ACI is wider and improves over fixed spectral weighting, but ordinary ACI is the better calibrated method for this dataset.

\begin{figure}[h]
\centering
\includegraphics[width=0.9\linewidth]{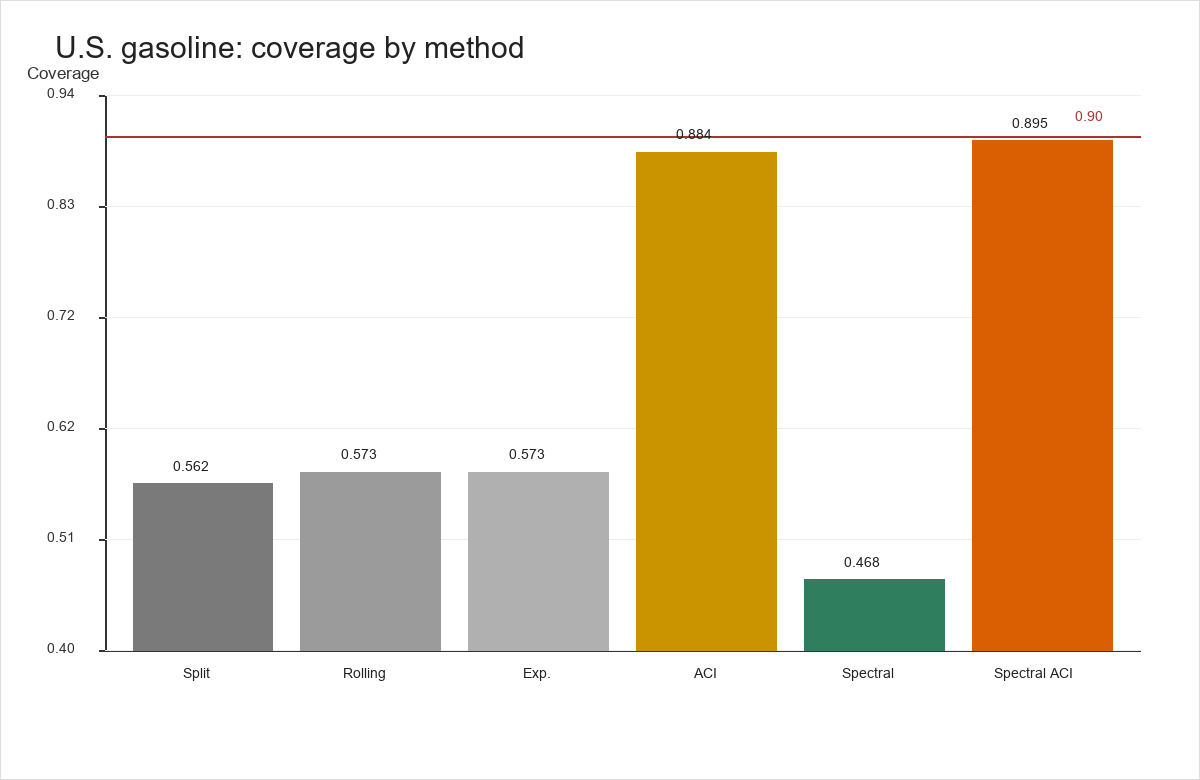}
\caption{Overall coverage by method for the U.S. gasoline example. Non-adaptive methods are far below the 90\% target.}
\label{fig:appendix-gasoline-coverage}
\end{figure}

\begin{figure}[h]
\centering
\includegraphics[width=0.9\linewidth]{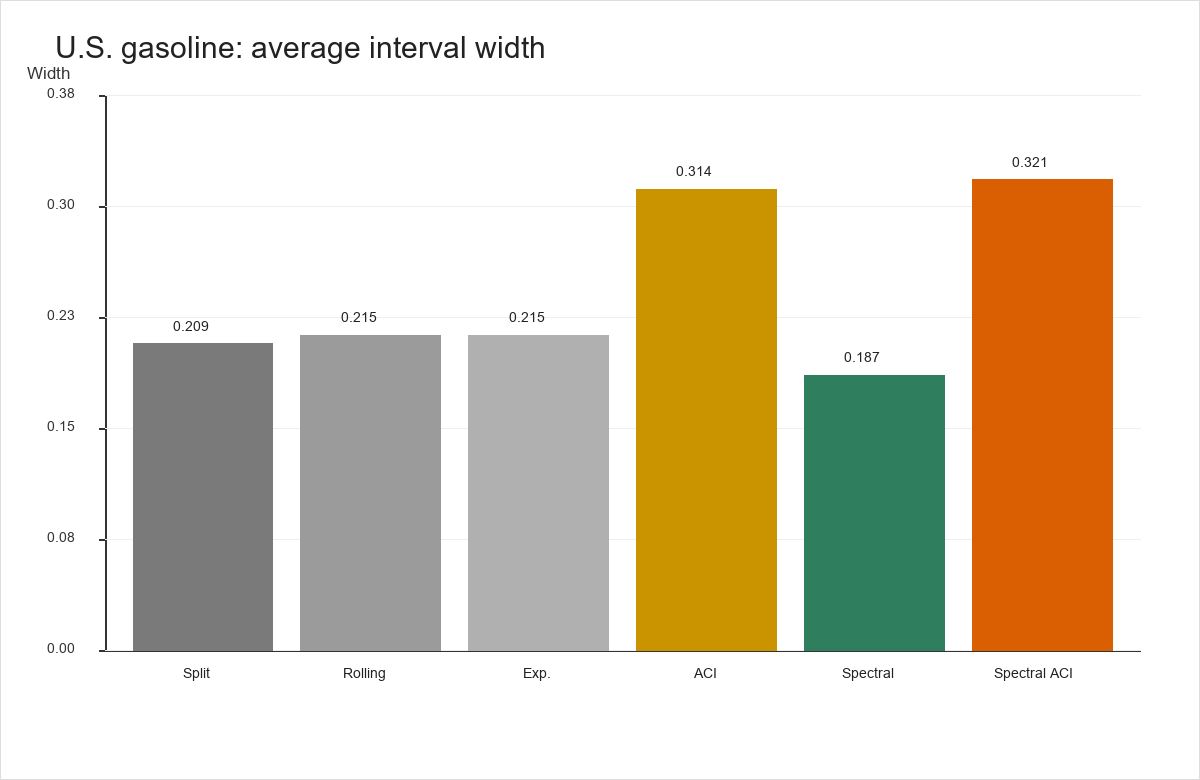}
\caption{Average interval width by method for the U.S. gasoline example. Adaptive methods widen intervals enough to recover coverage.}
\label{fig:appendix-gasoline-width}
\end{figure}

The gasoline figures are especially important. They show that under-coverage is not a minor numerical issue in this example. Non-adaptive intervals are narrow because they are not responding to the changed test-period behavior. The adaptive methods produce wider intervals, and that extra width is necessary.

\begin{figure}[h]
\centering
\includegraphics[width=0.9\linewidth]{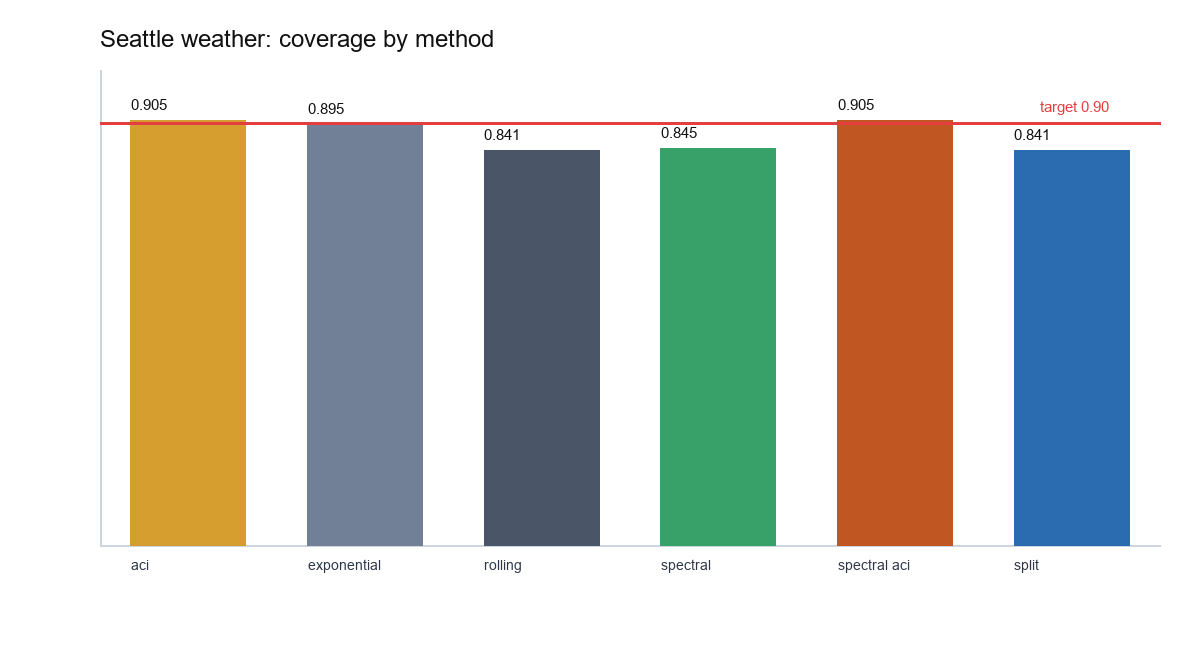}
\caption{Overall coverage by method for the Seattle weather example. Adaptive methods are closest to the 90\% target.}
\label{fig:appendix-seattle-coverage}
\end{figure}

\begin{figure}[h]
\centering
\includegraphics[width=0.9\linewidth]{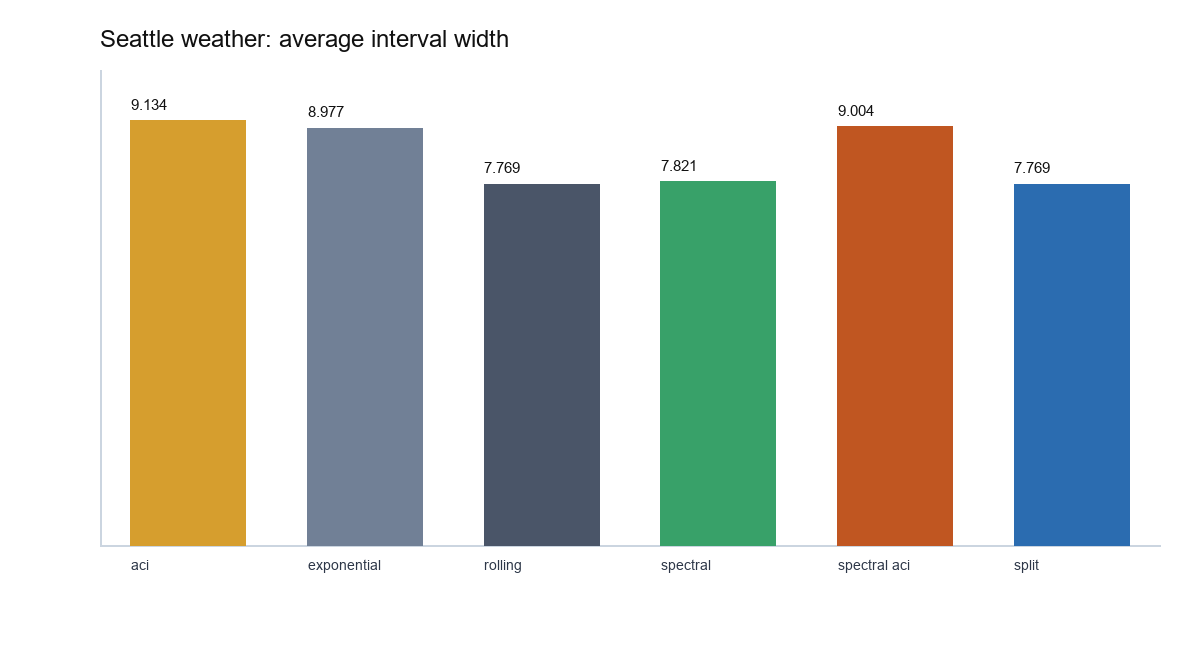}
\caption{Average interval width by method for the Seattle weather example. Spectral ACI is slightly narrower than ordinary ACI while reaching the same overall coverage.}
\label{fig:appendix-seattle-width}
\end{figure}

The Seattle figures show a more moderate case than gasoline. Non-adaptive methods under-cover, but not as severely as in the gasoline example. Spectral ACI and ordinary ACI both recover the target coverage, with spectral ACI giving a slightly smaller average width.

\begin{figure}[h]
\centering
\includegraphics[width=0.9\linewidth]{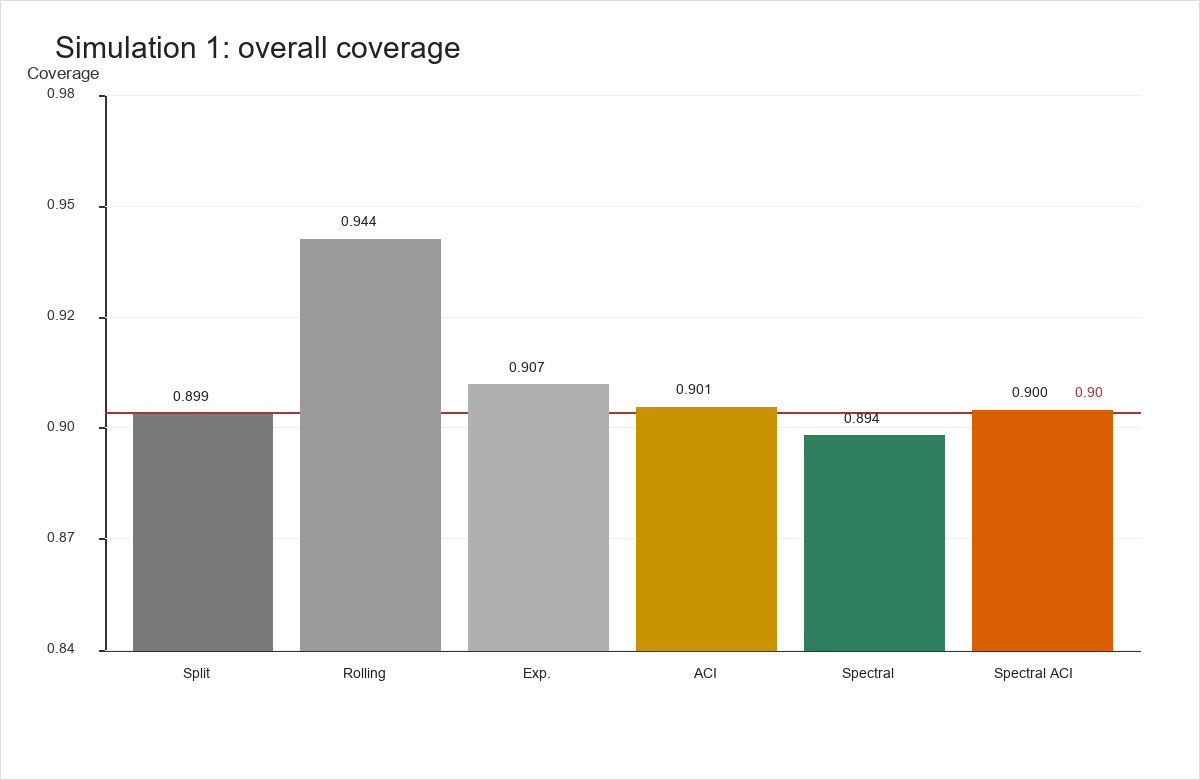}
\caption{Overall coverage in Simulation 1. The hybrid spectral adaptive method is close to the target while avoiding the conservatism of rolling conformal.}
\label{fig:appendix-sim1-overall}
\end{figure}

\begin{figure}[h]
\centering
\includegraphics[width=0.9\linewidth]{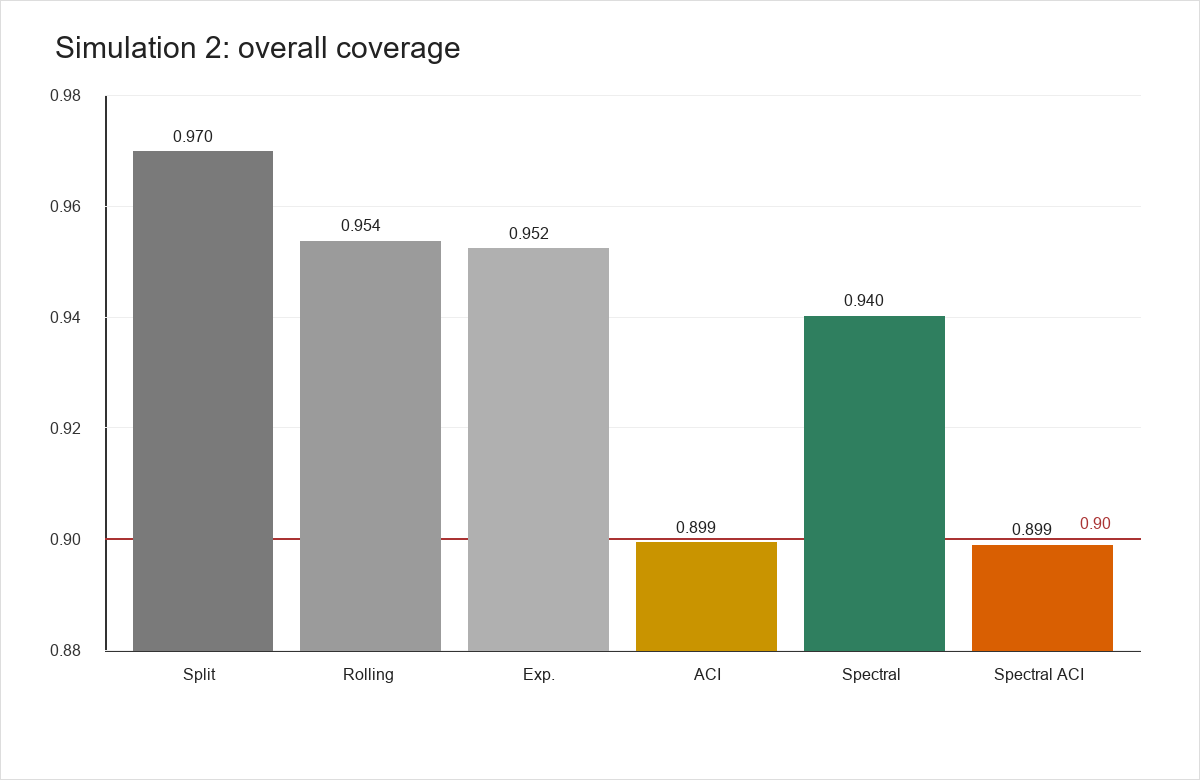}
\caption{Overall coverage in Simulation 2. Spectral ACI and ordinary ACI are close to the target in the corrected causal-feature run with 250 Monte Carlo repetitions; several non-adaptive methods over-cover.}
\label{fig:appendix-sim2-overall}
\end{figure}

These additional simulation figures give a quick visual summary of Tables \ref{tab:sim1overall} and \ref{tab:sim2overall}. They also reinforce the main conclusion: spectral weighting is useful, but the adaptive update is what keeps the miss rate aligned with the target.

\clearpage
\section*{Author Information}

Jeffery Opoku, University of Texas Rio Grande Valley. Email: \texttt{opokujeffery5@gmail.com}.

David Banahene, Florida International University. Email: \texttt{abanahene54@gmail.com}.

\section*{Declarations}

\textbf{Conflict of interest.} The authors declare no competing interests.

\textbf{Funding.} The authors received no specific funding for this work.

\textbf{Data and code availability.} The empirical datasets used in this paper are publicly available: the U.S. economic series from FRED, the Seattle weather dataset through vega-datasets, and the German electricity series through Open Power System Data. The scripts used to generate the simulation results, real-data results, and figures are included with the project files.

\clearpage
\bibliographystyle{plainnat}
\bibliography{references}

@article{angelopoulos2023gentle,
  title={A Gentle Introduction to Conformal Prediction and Distribution-Free Uncertainty Quantification},
  author={Angelopoulos, Anastasios N. and Bates, Stephen},
  journal={Foundations and Trends in Machine Learning},
  year={2023},
  volume={16},
  number={4},
  pages={494--591},
  eprint={2107.07511},
  archivePrefix={arXiv},
  primaryClass={cs.LG}
}

@article{lei2018distribution,
  title={Distribution-Free Predictive Inference for Regression},
  author={Lei, Jing and G'Sell, Max and Rinaldo, Alessandro and Tibshirani, Ryan J. and Wasserman, Larry},
  journal={Journal of the American Statistical Association},
  year={2018},
  volume={113},
  number={523},
  pages={1094--1111},
  eprint={1604.04173},
  archivePrefix={arXiv},
  primaryClass={stat.ME}
}

@article{romano2019conformalized,
  title={Conformalized Quantile Regression},
  author={Romano, Yaniv and Patterson, Evan and Candes, Emmanuel},
  journal={Advances in Neural Information Processing Systems},
  year={2019},
  volume={32},
  eprint={1905.03222},
  archivePrefix={arXiv},
  primaryClass={stat.ME}
}

@article{barber2021predictive,
  title={Predictive Inference with the Jackknife+},
  author={Barber, Rina Foygel and Candes, Emmanuel J. and Ramdas, Aaditya and Tibshirani, Ryan J.},
  journal={The Annals of Statistics},
  year={2021},
  volume={49},
  number={1},
  pages={486--507},
  eprint={1905.02928},
  archivePrefix={arXiv},
  primaryClass={math.ST}
}

@article{tibshirani2019conformal,
  title={Conformal Prediction Under Covariate Shift},
  author={Tibshirani, Ryan J. and Barber, Rina Foygel and Candes, Emmanuel J. and Ramdas, Aaditya},
  journal={Advances in Neural Information Processing Systems},
  year={2019},
  volume={32},
  eprint={1904.06019},
  archivePrefix={arXiv},
  primaryClass={stat.ME}
}

@article{barber2023conformal,
  title={Conformal Prediction Beyond Exchangeability},
  author={Barber, Rina Foygel and Candes, Emmanuel J. and Ramdas, Aaditya and Tibshirani, Ryan J.},
  journal={The Annals of Statistics},
  year={2023},
  volume={51},
  number={2},
  pages={816--845},
  eprint={2202.13415},
  archivePrefix={arXiv},
  primaryClass={math.ST}
}

@article{barber2021limits,
  title={The Limits of Distribution-Free Conditional Predictive Inference},
  author={Barber, Rina Foygel and Candes, Emmanuel J. and Ramdas, Aaditya and Tibshirani, Ryan J.},
  journal={Information and Inference: A Journal of the IMA},
  year={2021},
  volume={10},
  number={2},
  pages={455--482},
  eprint={1903.04684},
  archivePrefix={arXiv},
  primaryClass={math.ST}
}

@article{guan2023localized,
  title={Localized Conformal Prediction: A Generalized Inference Framework for Conformal Prediction},
  author={Guan, Leying},
  journal={Biometrika},
  year={2023},
  volume={110},
  number={1},
  pages={33--50},
  eprint={2106.08460},
  archivePrefix={arXiv},
  primaryClass={stat.ME}
}

@misc{hore2023conformal,
  title={Conformal Prediction with Local Weights and Randomization},
  author={Hore, Ross and Barber, Rina Foygel},
  year={2023},
  eprint={2310.07850},
  archivePrefix={arXiv},
  primaryClass={stat.ME}
}

@article{gibbs2021adaptive,
  title={Adaptive Conformal Inference Under Distribution Shift},
  author={Gibbs, Isaac and Candes, Emmanuel},
  journal={Advances in Neural Information Processing Systems},
  year={2021},
  volume={34},
  pages={1660--1672},
  eprint={2106.00170},
  archivePrefix={arXiv},
  primaryClass={stat.ME}
}

@article{xu2021enbpi,
  title={Conformal Prediction Interval for Dynamic Time-Series},
  author={Xu, Chen and Xie, Yao},
  journal={Proceedings of the 38th International Conference on Machine Learning},
  year={2021},
  pages={11559--11569},
  eprint={2010.09107},
  archivePrefix={arXiv},
  primaryClass={cs.LG}
}

@misc{xu2022sequential,
  title={Sequential Predictive Conformal Inference for Time Series},
  author={Xu, Chen and Xie, Yao},
  year={2022},
  eprint={2212.03463},
  archivePrefix={arXiv},
  primaryClass={stat.ME}
}

@article{zaffran2022adaptive,
  title={Adaptive Conformal Predictions for Time Series},
  author={Zaffran, Margaux and Dieuleveut, Aymeric and F{\'e}ron, Olivier and Goude, Yannig and Josse, Julie},
  journal={Proceedings of the 39th International Conference on Machine Learning},
  year={2022},
  pages={25834--25866},
  eprint={2202.07282},
  archivePrefix={arXiv},
  primaryClass={stat.ML}
}

@misc{bhatnagar2023improved,
  title={Improved Online Conformal Prediction via Strongly Adaptive Online Learning},
  author={Bhatnagar, Aadyot and Wang, Huan and Xiong, Caiming and Bai, Yu},
  year={2023},
  eprint={2302.07869},
  archivePrefix={arXiv},
  primaryClass={cs.LG}
}

@misc{yang2024bellman,
  title={Bellman Conformal Inference: Calibrating Prediction Intervals for Time Series},
  author={Yang, Zitong and Cand{\`e}s, Emmanuel and Lei, Lihua},
  year={2024},
  eprint={2402.05203},
  archivePrefix={arXiv},
  primaryClass={stat.ME}
}

@misc{sun2022copula,
  title={Copula Conformal Prediction for Multi-Step Time Series Forecasting},
  author={Sun, Sophia and Yu, Rose},
  year={2022},
  eprint={2212.03281},
  archivePrefix={arXiv},
  primaryClass={cs.LG}
}

@misc{yu2025dual,
  title={Dual-Splitting Conformal Prediction for Multi-Step Time Series Forecasting},
  author={Yu, Qingdi and Cao, Zhiwei and Wang, Ruihang and Yang, Zhen and Deng, Lijun and Hu, Min and Luo, Yong and Zhou, Xin},
  year={2025},
  eprint={2503.21251},
  archivePrefix={arXiv},
  primaryClass={cs.LG}
}

@misc{stocker2025gentle,
  title={A Gentle Introduction to Conformal Time Series Forecasting},
  author={Stocker, Thomas and Ma{\l}gorzewicz, Micha{\l} and Fontana, Matteo and Ben Taieb, Souhaib},
  year={2025},
  eprint={2511.13608},
  archivePrefix={arXiv},
  primaryClass={stat.ML}
}

@article{dahlhaus1997fitting,
  title={Fitting Time Series Models to Nonstationary Processes},
  author={Dahlhaus, Rainer},
  journal={The Annals of Statistics},
  year={1997},
  volume={25},
  number={1},
  pages={1--37}
}

@article{dahlhaus2009empirical,
  title={Empirical Spectral Processes for Locally Stationary Time Series},
  author={Dahlhaus, Rainer and Polonik, Wolfgang},
  journal={Bernoulli},
  year={2009},
  volume={15},
  number={1},
  pages={1--39},
  eprint={0902.1448},
  archivePrefix={arXiv},
  primaryClass={math.ST}
}

@article{wiese2019open,
  title={Open Power System Data -- Frictionless Data for Electricity System Modelling},
  author={Wiese, Frauke and Schlecht, Ingmar and Bunke, Wolf-Dieter and Gerbaulet, Clemens and Hirth, Lion and Jahn, Martin and Kunz, Friedrich and Lorenz, Casimir and M{"u}hlenpfordt, Jonathan and Reimann, Juliane and Schill, Wolf-Peter},
  journal={Applied Energy},
  year={2019},
  volume={236},
  pages={401--409},
  doi={10.1016/j.apenergy.2018.11.097}
}

@misc{fredIPG2211A2N,
  author={{Board of Governors of the Federal Reserve System (US)}},
  title={{Industrial Production: Utilities: Electric and Gas Utilities (NAICS = 2211,2) [IPG2211A2N]}},
  year={2026},
  howpublished={Retrieved from FRED, Federal Reserve Bank of St. Louis},
  url={https://fred.stlouisfed.org/series/IPG2211A2N},
  note={Monthly, not seasonally adjusted, index 2017=100}
}

@misc{fredGASREGW,
  author={{U.S. Energy Information Administration}},
  title={{US Regular All Formulations Gas Price [GASREGW]}},
  year={2026},
  howpublished={Retrieved from FRED, Federal Reserve Bank of St. Louis},
  url={https://fred.stlouisfed.org/series/GASREGW},
  note={Weekly, not seasonally adjusted, dollars per gallon}
}

@misc{vegaSeattleWeather,
  author={{Vega Datasets}},
  title={{Seattle Weather}},
  year={2026},
  howpublished={Retrieved from the vega-datasets repository},
  url={https://github.com/vega/vega-datasets/blob/main/data/seattle-weather.csv},
  note={Daily Seattle weather observations}
}

\end{document}